\newcommand{\Sec}[1]{\hyperref[sec:#1]{\S\ref*{sec:#1}}} 
\newcommand{\Eqn}[1]{\hyperref[eq:#1]{(\ref*{eq:#1})}} 
\newcommand{\Fig}[1]{\hyperref[fig:#1]{Figure~\ref*{fig:#1}}} 
\newcommand{\Tab}[1]{\hyperref[tab:#1]{Table~\ref*{tab:#1}}} 
\newcommand{\Thm}[1]{\hyperref[thm:#1]{Theorem~\ref*{thm:#1}}} 
\newcommand{\Lem}[1]{\hyperref[lem:#1]{Lemma~\ref*{lem:#1}}} 
\newcommand{\Prop}[1]{\hyperref[prop:#1]{Property~\ref*{prop:#1}}} 
\newcommand{\Cor}[1]{\hyperref[cor:#1]{Corollary~\ref*{cor:#1}}} 
\newcommand{\Def}[1]{\hyperref[def:#1]{Definition~\ref*{def:#1}}} 
\newcommand{\Alg}[1]{\hyperref[alg:#1]{Algorithm~\ref*{alg:#1}}} 
\newcommand{\Ex}[1]{\hyperref[ex:#1]{Example~\ref*{ex:#1}}} 
\newcommand{\Real}{\mathbb{R}}
\newcommand{\Fold}[1]{{\tt fold}\left( #1 \right)}
\newcommand{\Unfold}[1]{{\tt unfold}\left( #1 \right)}
\newcommand{\bcirc}{ {\tt bcirc}}
\newcommand{\V}[1]{{\bm{\mathbf{\MakeLowercase{#1}}}}} 
\newcommand{\M}[1]{{\bm{\mathbf{\MakeUppercase{#1}}}}} 
\newcommand{\T}[1]{\boldsymbol{\mathscr{\MakeUppercase{#1}}}} 
\newcommand{\ormat}[1]{\overrightarrow{\mathcal{#1}}} 
\newcommand{\tube}[1]{\overrightarrow{\bold{#1}}} 
\newcommand{\TA}{\T{A}}
\newcommand{\TB}{\T{B}}
\newcommand{\TV}{\T{V}}
\newcommand{\fft}{ \mbox{\tt fft} }
\newcommand{\ifft}{ \mbox{\tt ifft} }
\newcommand{\TY}{\T{Y}}
\newcommand{\TW}{\T{W}}
\newcommand{\argmin}{{\text{argmin}}}
\begin{document}

\title{Clustering multi-way data: a novel algebraic approach \thanks{This research is funded in part by NSF:1319653. The first author acknowledges support for this work by the Tufts Summer Scholars's Program 2013.}
}


\author{Eric Kernfeld         \and
        Shuchin Aeron \and
        Misha Kilmer  
}


\institute{Eric Kernfeld \at
              Dept. of Statistics, University of Washington, Seattle, WA \\
              \email{fauthor@example.com}           
           \and
           Shuchin Aeron \at
              Dept. of Electrical and Computer Engineering, Tufts University, Medford, MA
              \email{shuchin@ece.tufts.edu}
              \and
             Misha Kilmer \at
              Dept. of Mathematics, Tufts University, Medford, MA
              \email{Misha.Kilmer@tufts.edu}
}

\date{Received: date / Accepted: date}

\maketitle

\begin{abstract}
In this paper, we develop a method for unsupervised clustering of two-way (matrix) data by combining two recent innovations from different fields: the Sparse Subspace Clustering (SSC) algorithm \cite{Elhamifar:0zr}, which groups points coming from a union of subspaces into their respective subspaces, and the t-product \cite{Kilmer:2011vn}, which was introduced to provide a matrix-like multiplication for third order tensors. Our algorithm is analogous to SSC in that an ``affinity'' between different data points is built using a sparse self-representation of the data. Unlike SSC, we employ the t-product in the self-representation. This allows us more flexibility in modeling; in fact, SSC is a special case of our method. 

When using the t-product, three-way arrays are treated as matrices whose elements (scalars) are n-tuples or tubes. Convolutions take the place of scalar multiplication. This framework allows us to embed the 2-D data into a vector-space-like structure called a free module over a commutative ring. These free modules retain many properties of complex inner-product spaces, and we leverage that to provide theoretical guarantees on our algorithm. We show that compared to vector-space counterparts, SSmC achieves higher accuracy and better able to cluster data with less preprocessing in some image clustering problems. In particular we show the performance of the proposed method on Weizmann face database, the Extended Yale B Face database and the MNIST handwritten digits database.
\keywords{Multi-way data \and Clustering \and Sparsity \and Convex optimization}
\end{abstract}

\section{Introduction}
In most clustering algorithms, the objects are assumed to be embedded in a normed linear space, and similarity is measured by a distance-like function. Among many different options, this can be followed by construction of a weighted graph in which similar points are joined by strong edges. Then, using tools from spectral graph theory, it is possible to find the clusters as connected graph components \cite{njwspectral}
. 

In contrast, subspace clustering techniques take a different perspective on the geometry of clustering. In these approaches, points are assumed to come from a union of subspaces rather than from disjoint, volume-filling clusters. For subspace clustering methods using spectral graph theory as a final step (which many do), any relevant notion of similarity should reflect whether points belong to the same subspace. Simple variations on distance are no longer effective. For this problem, a diverse array of subspace clustering methods \cite{Vidal:2011bh}, such as Sparse Subspace Clustering (SSC), can be employed to resolve the clusters. Some even reject outliers \cite{Elhamifar:0zr,HeckelB13_ISIT,Heckel_ArXiv13,Soltanolkotabi2013}. 


For clustering data with two-way structure, such as images, typical subspace clustering methods must unfold the data or map it to a vector. This approach sacrifices the two-way structure, potentially failing to exploit useful information. Outside of subspace clustering, on the other hand, many methods take advantage of multi-way array structure, particularly for dimensionality reduction or finding the best (single) subspace in which to approximate the data. For examples of these, see \cite{HeCN05_NIPS,Hao:2013uq,De-Lathauwer:2000uq,Sorber:2013ly} and references therein.

We know of no work exploiting multi-way structure in techniques similar to subspace clustering, and our goal is to fill that gap. In this paper we present a novel algebraic approach for clustering multi-way data. Whereas existing subspace clustering methods concatenate data as columns of a matrix, our method will group the data into a three-way array (a tensor), clustering slices of the tensor. Using the tensor products and factorizations outlined in \cite{Kilmer:2011vn}, we will extend Sparse Subspace Clustering \cite{Elhamifar:0zr} for use on tensor data. Although our strategy could process $N$-way ($N >2$) data by incorporating technical tools in the style of \cite{Martin:2013kx}, we chose to focus this work on clustering only two-way data. We will demonstrate that this new model is able to achieve higher accuracies than previous solutions, especially for data that has undergone less preprocessing.

Before we begin with necessary background in Section \ref{sec:back} we would like to summarize our main contributions below.
\paragraph{Our contributions:}
First, we propose a new algebraic generative model, based on a characterization of third order tensors from \cite{Braman2010} as operators, via multiplication called the t-product introduced in \cite{KilmerMartin2011}, on a 
space of oriented matrices.  
This model is explained in Sections \ref{subsec:free} and \ref{subsec:union_submod}.  
For inference with our model, we propose a novel clustering algorithm in Section \ref{sec:SSmC_Alg}. To characterize the algorithm's performance, in Section \ref{sec:SSmC_Theorems}, in this paper we add to the mathematical framework given in \cite{Braman2010,KilmerBramanHooverHao2012}.  Our new constructs then are used to derive performance bounds in Section \ref{sec:SSmC_Theorems} .  
Our completely new results are of similar flavor as those in \cite{Elhamifar:0zr} in that the ability to separate (cluster) data is characterized in terms of worst case \emph{tubal-angles} between submodules. However, these generalizations of the geometrical notions in \cite{Elhamifar:0zr} for linear algebra to the tensor and t-product framework are not immediately obvious, and the key technical development in this paper which makes this possible is stated in the theoretical results of the present manuscript and in the supplementary material. In Section \ref{sec:Sim}, we conduct experiments with synthetic data, the Weizmann Face data base \cite{Weizman}, the Extended Yale B Face Database \cite{Yale_DB1,Yale_DB2}, and the MNIST handwritten digits \cite{MNIST}.

\section{Background and Preliminaries}
\label{sec:back}

\subsection{Sparse Subspace Clustering}
Sparse Subspace Clustering (SSC) \cite{Elhamifar:0zr} is a recent method for solving the subspace clustering problem. Like many clustering methods, SSC constructs an affinity matrix whose $(i,j)$ entry is designed to be large when data points $i$ and $j$ are in the same subspace and relatively small or zero otherwise. To construct the affinity matrix, Sparse Subspace Clustering makes use of the fact that each datum can be expressed most efficiently {\it as a linear combination of members of its own subspace}. In particular, each data point is expressed as a linear combination of the others, with an $\ell_1$ regularization term to promote sparsity, \cite{Elhamifar:0zr}, \cite{GeometricSSC}. Spectral clustering \cite{Luxburg:2007dq} is then used to segment the data using the sparse coefficients as the affinity matrix. Variants of SSC have been considered in \cite{Soltanolkotabi2013} and \cite{Wang2013}, which guarantee correctness for SSC with bounded noise. Similar algorithms include low-rank representation (LRR) \cite{2010arXiv1010.2955L}, which uses a nuclear norm penalty in place of the $\ell_1$ regularization and also has theoretical guarantees on performance.

In this paper, we wish instead to promote the idea of clustering matrix objects by maintaining those objects in their
two-dimensional form, as opposed to vectorizing the matrices and clustering their vector representations.   Of course, a linear combination of vectors of length $mn$, if those vectors are reshaped into an $m \times n$ matrix, is a linear
combination of matrices.  It is important to know that we are proposing a far different representation for our clustering approach.   If we orient a $m \times n$ matrix by twisting it into the page, the resulting object is a $m \times 1 \times n$ third order tensor.  
But an $m \times 1 \times n$ tensor is a vector of length $m$, where each entry is an $1 \times 1 \times n$ tube
fiber.  Thus, the elemental objects are themselves these $1 \times 1 \times n$ tubes (called tube fibers in the tensor literature) in our scenario.   If we have a method of multiplying two tube fibers, then we have a method for writing ``linear'' combinations of oriented matrices, where our {\it weights are tube fibers, and not scalars}.   The method we will employ for this multiplication is the t-product, as specified in \cite{KilmerMartin2011,Braman2010,KilmerBramanHooverHao2012}.   As this product is central to the development of our algorithm and associated theoretical constructs, we now introduce the method.   

\subsection{The t-product}

We will now introduce the $t$-product \cite{Kilmer:2011vn,Braman2010} along with some notation. We will denote scalars with regular uppercase and lowercase font: $d_i, D\in \mathbb{R}$. Vectors will be in lowercase and bold, and matrices will be bold and uppercase, for example $\V{v}\in \mathbb{R}^n; \M{X} \in \mathbb{R}^{m\times n}$. In the following we will rely heavily on the MATLAB notation for indexing the elements of a tensor; we treat tensors as multiway arrays stored in MATLAB. For readers unfamiliar with MATLAB ``colon notation'', a colon in place of an index indicates that an entire cross-section of the array is being accessed; for example, column one of $\M{X}$ is $\M{X}(:,1)$. We will also use the MATLAB function $\fft$ and its standard usage. On a three-way array, $\fft (\T{X},[\,],3)$ applies the transform the the third dimension of the array, that is, to each tube fiber $\T{X}(i,j,:)$ separately.

We view a 3-D tensor $\T{X} \in \Real^{H \times L \times D}$ as an $H \times L$ matrix of tubes in $\Real^{1\times1\times D}$. These tubes will be denoted by adding an arrow over the vector notation: $\tube{\V{v}} \in \Real^{1\times1\times D}$. 
Similarly, we think of a $H \times 1\times D$ tensor as a vector of tubes. We will call such tensors ``oriented matrices'' (see Figure \ref{fig:t_prod}) and denote them with arrows and calligraphic script: $\ormat{M}\in \Real^{H \times 1\times D}$. In order to 
define matrix-like operations, we multiply these tubes (in our approach, tubes take the place of scalars) using a commutative operation between two tubes $\tube{\V{v}} \in \Real^{1\times 1 \times D}$ and $\tube{\V{u}}  \in \Real^{1\times 1 \times D}$ resulting in another tube of same length. The commutative operation used is the circular convolution; we will write $\ast$ to denote the circular convolution. 

\begin{figure}[htbp]	
\centering
 	\includegraphics[height=1.5in]{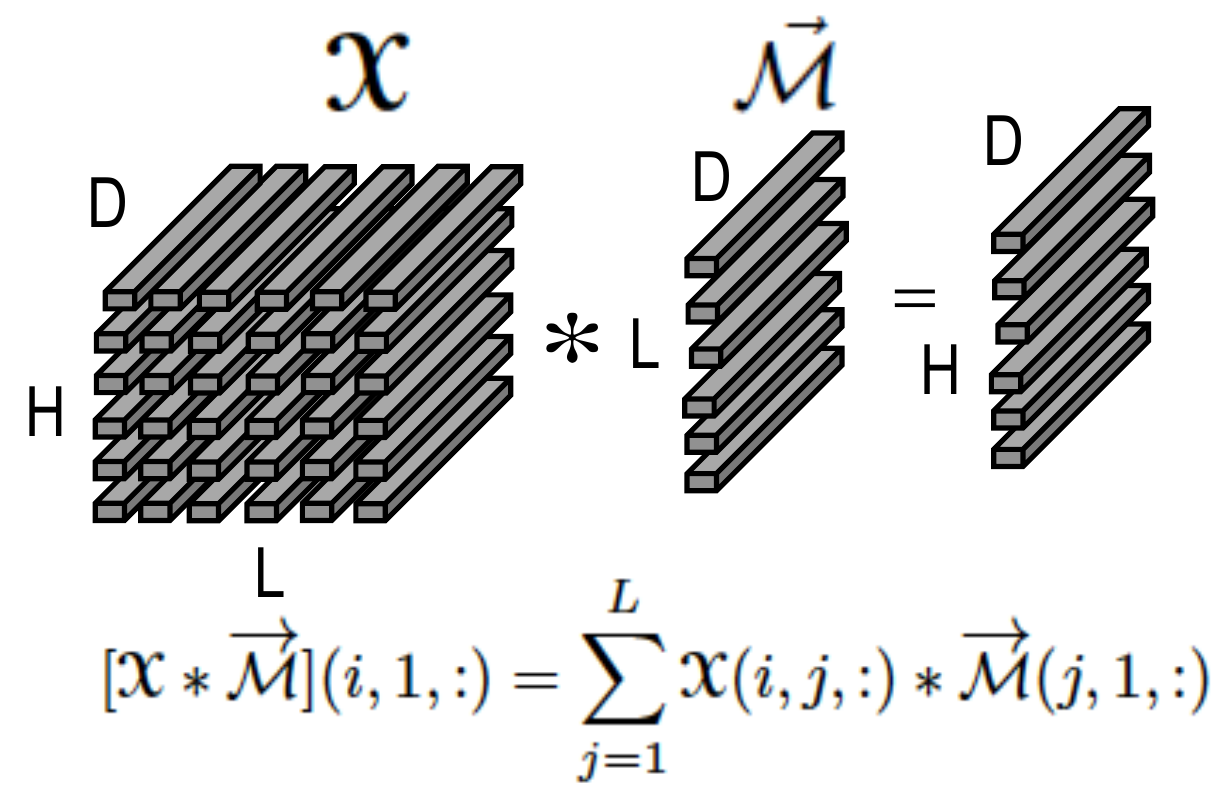}
 	\caption{\label{fig:t_prod} \footnotesize{An illustration of a 3-D tensor acting as an operator on an oriented matrix, $\ormat{M}$.  The result is oriented matrix. }}
	\end{figure} 

Under this construction, the product of a tensor $\T{X} \in \Real^{H \times L \times D}$ with an oriented matrix $\ormat{M} \in \Real^{L \times 1 \times D}$ is another oriented matrix of size $H \times 1 \times  D$ whose $i$-th tubal element given by, $$ [\T{X} \ast \ormat{M}](i,1,:) = \sum_{j=1}^{L} \T{X}(i,j,:) \ast \ormat{M}(j,1,:)$$ as illustrated in Figure~\ref{fig:t_prod}. Similarly one can extend this definition to define the multiplication of two tensors $ \T{X}$ and $\T{Y}$ of sizes $H \times L \times D$ and $L \times K \times D$ respectively, resulting in a tensor $\T{C} = \T{X} \ast \T{Y}$ of size $H \times K \times D$. This product between two tensors is referred to as the t-product, and its properties are developed in \cite{Kilmer:2011vn,Braman2010,KilmerBramanHooverHao2012}. 

\paragraph{Computing the t-product using the FFT}
\label{sec:tprod_fft}
The t-product can be effectively computed using the Fast Fourier Transform. Using MATLAB notation and built-in functions $\fft$ and $\ifft$, two tensors can be multiplied using the following steps (See \cite{Kilmer:2011vn} for proofs):
\begin{enumerate}
\item Compute Fourier transform along the 3-rd dimension - $\hat{\T{X}} = \fft (\T{X},[\,],3), \hat{\T{Y}} = \fft (\T{Y},[\,],3)$
\item Compute face-product in the Fourier domain - $ \hat{\T{C}} = \hat{\T{X}} \odot \hat{\T{Y}}$, where the $d$-th frontal face is given by $\hat{\T{C}}(:,:,d) = \hat{\T{X}}(:,:,d) \hat{\T{Y}}(:,:,d)$.

\item Take the inverse Fourier transform along the 3-rd dimension - $\T{C} = \T{X} \ast \T{Y} =  \ifft (\hat{\T{X}} \odot \hat{\T{Y}},[\,],3)$.
\end{enumerate}

\subsubsection{Linear algebra with the t-product: notation and facts}
\label{subsec:free}

As noted above, our elemental objects are tubes in $\mathbb{K}_D$, rather than scalars in $\mathbb{C}$.  
Now $\mathbb{C}$ with standard scalar addition and multiplication forms what is referred to in abstract algebra as a field.   And $\mathbb{C}^n$ equipped over this field forms a vector space.   
But $\mathbb{K}_D$ equipped with $*$ does not form a field, because there are non-zero tubes which 
are not invertible.   However, $(\mathbb{K}_D, *)$ does form what is referred to as a ring with unity \cite{KilmerMartin2011}.   
A module over a ring can be thought of as a generalization of the concept of a vector space over a field, where the corresponding scalars are now the elements of the ring.  
In linear algebra over a ring, {\it the analog of a subspace is a free submodule}.   Our algorithm relies on 
submodules, and as such, we need to carefully set up the rest of the framework.   

We begin this section by presenting a theorem from \cite{Braman2010}, because it and its corollaries imply that many useful properties of subspaces are still present with free submodules. In a variation on the notation in that work, we denote the set of length $D$ tubes by $\mathbb{K}_D$ and the set of oriented matrices of size $H  \times D$ by $ \mathbb{K}_D^{H}$. Likewise, $\mathbb{K}_{D}^{L\times H}$ represents the set of tensors of size $H \times L \times D$. (We use $\mathbb{K}_{D}$ in place of $\Real^{1\times1\times D}$ to signal that the algebraic structure associated with the set is the t-product rather than the more widespread framework of multilinear algebra.) We note that the set $\mathbb{K}_D$ forms a ring with identity using multiplication given by $\ast$ and the usual addition. Then we have the theorem from \cite{Braman2010} is as follows. 

\begin{theorem}[Braman \cite{Braman2010}]
\label{thm:free}
The set of oriented matrices $ \mathbb{K}_D^{H}$ forms a free module over $\mathbb{K}_D$.
\end{theorem}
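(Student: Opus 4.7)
The plan is to verify the module axioms and then exhibit an explicit basis, following the template of the standard basis proof for $\mathbb{C}^n$ as a vector space, adapted to the fact that the scalars are now the tubes in $\mathbb{K}_D$ under $*$.

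First I would record the easy half: $\mathbb{K}_D^H$ is a module over $(\mathbb{K}_D, +, *)$. The additive abelian group structure is inherited entry-wise from the abelian group $(\mathbb{K}_D, +)$. For the scalar multiplication I would use the t-product of a $1\times 1\times D$ tube with an $H\times 1\times D$ oriented matrix, defined entry-wise by $(\vec{\V{a}} * \ormat{M})(i,1,:) = \vec{\V{a}} * \ormat{M}(i,1,:)$. The four module axioms (associativity, distributivity on both sides, unital action by the identity tube $\vec{\V{e}}$ whose only nonzero entry is $1$ in the first slot) then follow directly from the corresponding ring-theoretic properties of $*$ established in \cite{KilmerMartin2011}: $*$ is associative, commutative, distributes over $+$, and $\vec{\V{e}}$ is its identity. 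These verifications are routine and I would only sketch them.

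The substantive step is exhibiting a basis. For $i = 1, \ldots, H$ define $\ormat{E}_i \in \mathbb{K}_D^H$ by $\ormat{E}_i(j,1,:) = \vec{\V{e}}$ if $j=i$ and $\ormat{E}_i(j,1,:) = \vec{\V{0}}$ otherwise. I would show two things. (Spanning) Any $\ormat{M} \in \mathbb{K}_D^H$ can be written as
\begin{equation*}
\ormat{M} \;=\; \sum_{i=1}^{H} \ormat{M}(i,1,:) \,*\, \ormat{E}_i,
\end{equation*}
because on slot $j$ the right-hand side reduces to $\ormat{M}(j,1,:) * \vec{\V{e}} = \ormat{M}(j,1,:)$. (Independence) Suppose $\sum_{i=1}^H \vec{\V{c}}_i * \ormat{E}_i = \ormat{0}$; reading off slot $j$ gives $\vec{\V{c}}_j = \vec{\V{0}}$ for every $j$. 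Together these show $\{\ormat{E}_1, \ldots, \ormat{E}_H\}$ is a basis, so $\mathbb{K}_D^H$ is free of rank $H$.

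The only conceptual point that might catch a reader is that $\mathbb{K}_D$ has zero divisors (nontrivial tubes whose circulant matrices are singular are not invertible under $*$), so one might worry that the usual vector-space argument degenerates. However the definition of a free module only requires that a trivial $\mathbb{K}_D$-linear combination of the $\ormat{E}_i$ be the zero oriented matrix, and the independence argument above does not use invertibility of the coefficients at all—it uses only that the $\ormat{E}_i$ have pairwise disjoint supports and that $\vec{\V{c}} * \vec{\V{e}} = \vec{\V{c}}$. So no real obstacle arises; the main thing to be careful about is stating the module definition over a ring (as opposed to a field) correctly before invoking it, which I would do by citing the abstract-algebra definition and then carrying out the two checks above.
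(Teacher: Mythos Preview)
Your proof is correct, but note that the paper itself does not prove this theorem: it is stated as a result of Braman \cite{Braman2010} and cited without argument, so there is no in-paper proof to compare against. Your approach---verify the module axioms over the commutative ring $(\mathbb{K}_D,+,*)$ and then exhibit the canonical basis $\{\ormat{E}_i\}_{i=1}^H$ with the identity tube in slot $i$ and zero tubes elsewhere---is the standard one for showing $R^n$ is free over a commutative ring $R$, and it is almost certainly what Braman does as well. Your closing remark is the one nontrivial observation: the linear-independence check for the $\ormat{E}_i$ relies only on their disjoint supports and on $\tube{c}*\tube{e}=\tube{c}$, never on inverting a coefficient tube, which is exactly why the argument goes through despite $\mathbb{K}_D$ having zero divisors.
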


The analogue of a subspace is that of a submodule as defined below. 

\begin{definition} A set  $\mathbb{S} \subset  \mathbb{K}_D^{H}$ is a submodule of $\mathbb{K}_D^{H}$ if it is a subset of $\mathbb{K}_D^{H}$, it contains the $0$ element in $\mathbb{K}_D^H$, is 
closed under addition\footnote{Addition in this set is component-wise.} and is closed under multiplication with a tube under t-product multiplication. \end{definition}

Just as a subspace has a basis, a free submodule has a generating set so that any element of the free submodule can be written as a ``t-linear combination" of generating elements \footnote{The adjective ``free'' applies when the submodule has a generating set.} . By ``t-linear combination,'' we mean a sum of oriented matrices multiplied, with the t-product, by coefficients from $\mathbb{K}_D$; we illustrate this in Figure~\ref{fig:t_lin}. 

Furthermore, free submodules have the unique dimension property: if two generating sets are ``linearly independent'' (we will define linear independence momentarily in Definition \ref{def:lin_ind}), they will be the same size, and this size is what we call as the submodular dimension. 

\begin{figure}[htbp]
\centering \includegraphics[scale=0.6]{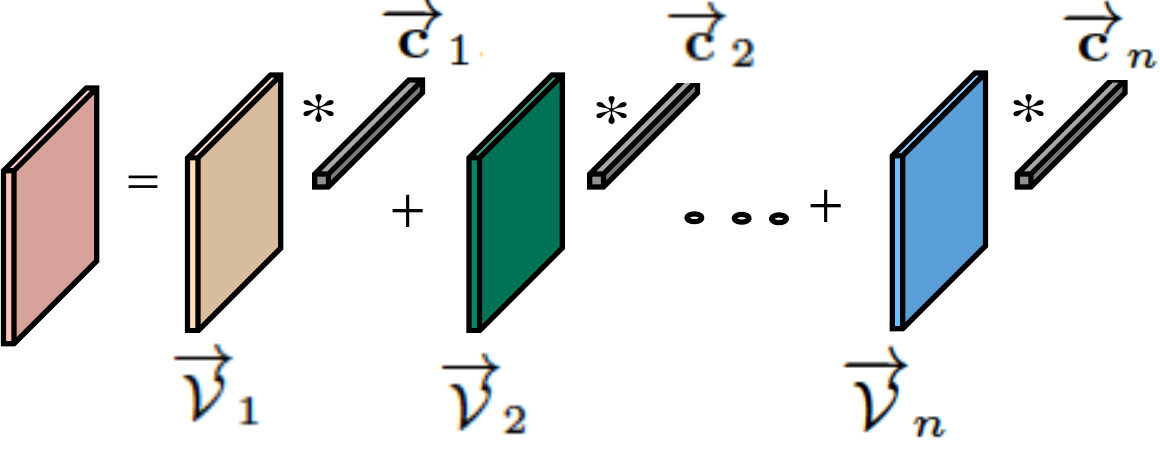}
\caption{\footnotesize{An element of a submodule generated by t-linear combinations of oriented matrices $\ormat{V}_1,\ormat{V}_2, \hdots, \ormat{V}_n$ with coefficients $\tube{\V{c}}_1, \tube{\V{c}}_2, \hdots, \tube{\V{c}}_n$.}}
\label{fig:t_lin} 
\end{figure}

We now give a notion of transpose for third order tensors. This definition is from \cite{Kilmer:2011vn} where it is shown that the new transpose preserves the following property $(\TA\ast\TB)^\top=(\TB^\top \ast \TA^\top)$.
\begin{definition}
\emph{\textbf{Tensor Transpose}}.  Let $\T{X}$ be a tensor of size $n_1 \times n_2 \times n_3$. Then $\T{X}^\top$ is the $n_2 \times n_1 \times n_3$ tensor obtained by transposing each of the frontal slices and then reversing the order of transposed frontal slices $2$ through $n_3$.
\end{definition}

Next, we will define the mathematically rigorous ideas of sum, linear independence, and disjointness for submodules which are useful for subsequent analysis.   {\it With the exception of linear independence, these have not been previously definited elsewhere in the literature on the t-product.  } In the case of scalars with depth one $D=1$, our definitions will reduce to the linear algebraic definitions as seen in \cite{Vidal:2011bh}, since convolution will reduce to scalar multiplication in that case. We also define linear independence for lists of oriented matrices. 
In this article, we will consider only free submodules.

\begin{definition}
\label{def:submod_sum}
The sum of two submodules $\mathbb{S}_i$ and $\mathbb{S}_j$ is defined as 
$\mathbb{S}_i {+}\mathbb{S}_j = \{\ormat{X}|\exists \ormat{V}\in \mathbb{S}_i,\ormat{W}\in\mathbb{S}_j $ so that $\ormat{X}=\ormat{V}+\ormat{W}\} $.\label{sum}
\end{definition}

\begin{definition}
 A collection of submodules is linearly independent if each one intersects the sum of the others only at the zero element of $\mathbb{K}_D^H$.
\end{definition}

\begin{definition}
 A collection of submodules is disjoint if each one intersects the union of the others only at the zero element of $\mathbb{K}_D^H$.
\end{definition}

 \begin{definition}
Consider the equation $\sum_{i=1}^N \ormat{V}_i \ast \tube{c}_i = \ormat{O}$, where the right hand side is the
oriented matrix of all zeros.   
Assume that the only solution is for all the $\tube{c}_i$ to be the zero tube.  Then, the collection of oriented matrices $\ormat{V}_1...\ormat{V}_N$ is linearly independent. \label{def:lin_ind}
\end{definition}

\section{Generative model: Union of Submodules}
\label{subsec:union_submod}

In this paper we will model the 2-D data as coming from union of free submodules and derive an algorithm in Section \ref{sec:SSmC_Alg} for clustering data. Before that we would like to explain some nuances of the proposed model. 

\textbf{Explaining t-linear combinations.}
The definition of the t-product as introduced in \cite{KilmerMartin2011} was written in terms of block circulant matrices, and we include it here to help interpret our model. Let $\T{A}^{(d)}$ denote the $d$-th frontal slice of $\T{A}$.   Using functions 
$$\bcirc(\TA)=\left[\begin{array}{llll} 
\TA^{(1)}&\TA^{(D)} &\hdots &\TA^{(2)}\\
\vdots&\TA^{(1)}&\ddots&\vdots\\
\TA^{(D-1)}&\vdots&\ddots&\TA^{(D)}\\
\TA^{(D)}&\TA^{(D-1)}&\hdots&\TA^{(1)}\end{array}\right],$$
and
$$ \Unfold \TA=\left[\begin{array}{c}\TA^{(1)}\\ \vdots \\ \TA^{(D)}\end{array}\right]$$, and $\Fold{}$ to invert $\Unfold{}$, the t-product $\TA\ast\TB$ was given by
 $$\TA * \TB = \Fold{\bcirc(\TA)\Unfold \TB}.$$

\begin{figure}[htbp]
\begin{center}
\includegraphics[scale=.25]{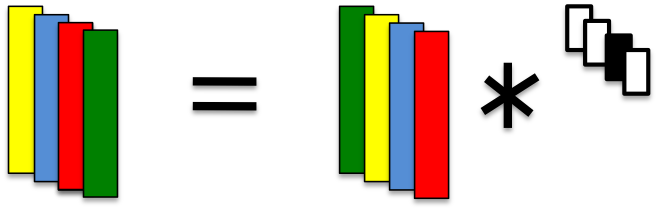}\\
\includegraphics[scale=.25]{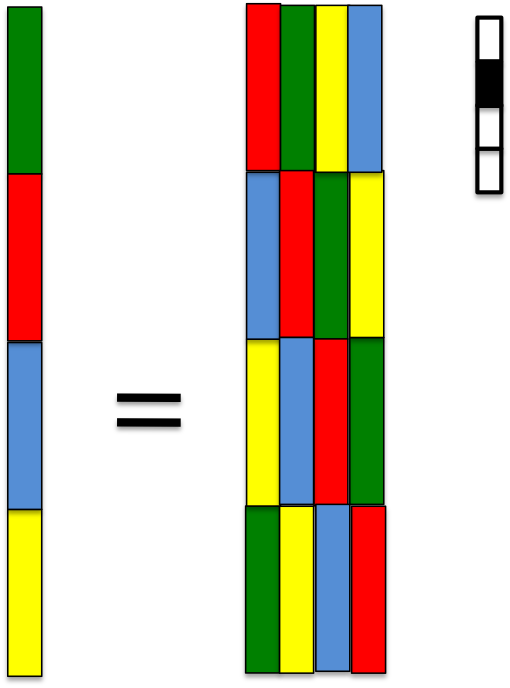}
\caption{\footnotesize{Using the t-product, data may be generated from shifted copies of generating elements.}}\label{fig:shifting}
\end{center}
\end{figure}

 Given a t-linear combination like the one in Figure \ref{fig:t_lin}, Figure \ref{fig:shifting} focuses in on a single oriented matrix and its ``scalar'' coefficient. For illustrative purposes, the coefficient is shown with only a single nonzero, in position 2. This nonzero element ``selects'' the second column of the block circulant matrix, which is then folded back into an oriented matrix. After folding, we observe that the new ``scalar multiple'' has been shifted relative to the original.

\subsection{Submodules in place of subspaces} The takeaway from the demo above is that every oriented matrix spans a subspace containing multiples only of itself, but generates a submodule containing multiples of certain permutations of its column fibers as well. The presence of these shifted copies is what distinguishes t-linear combinations from linear combinations. To use terms from signal processing, the t-linear combination uses each coefficient tube to represent a filter. For our proposed approach to succeed, the structure imposed by the filter must capture the patterns in the data, and we will argue below that the block circulant matrices used here are suitable to represent data sets generated from images.

We note that of course one could construct a pathological case for which this model is not adequate by creating two clusters, one consisting of shifted copies of the other's elements. These would be distinguishable by subspace clustering, but not submodule clustering. Even though these pathological examples exist in theory, we believe the shifting of image columns that we allow is particularly suited to images, since many causes of within-cluster variation--for instance, moving subjects or camera panning--can be approximately represented by shifts while between-cluster variation usually does not resemble shifting.

Modeling image collections with submodules is further warranted by its success in previous applications such as video restoration from missing pixels \cite{ZhangEAHK13} and face recognition \cite{Hao:2013uq}. This prior work shows that the low-dimensional-submodule assumption can provide a useful framework to regulate model complexity while accurately capturing natural imaging data. What's more, the variety of potential models phrased in terms of submodules is not limited to just shifting with circular boundary conditions. In other work by our group, we have found an entire family of tensor-tensor products that one might use to replace the t-product \cite{Kernfeld_HPAT}, some of which are also well suited to imaging tasks. For example, we have a tube-scalar product that replaces convolution with periodic boundary conditions (implicit in the t-product definition) with convolutions using reflected boundary conditions; this has been shown to improve image de-blurring algorithms \cite{Ng00cosinetransform}. The methods presented in this paper will be easily extended to use those new adaptations.

\section{Sparse Submodule Clustering}

Our algorithm and the development below is based on the following algebraic assumption. We assume that the data, viewed as a list of oriented matrices, comes from a \emph{union of disjoint free submodules}. The task is to find these submodules and group the data into their respective clusters. 

\subsection{The Algorithm}
\label{sec:SSmC_Alg}

To develop our algorithm, we will need to define the following tensor norms.


\begin{definition}  By $\|\TA\|_{F1}$, we mean $\sum_{i,j} ||\TA(i,j,:)||_F$, a group LASSO type norm equal to the sum of the Frobenius norms (F-norms) of the tubes. \end{definition}

\begin{definition}  $||\TA||_{FF1}$ denotes $\sum_{i} ||\TA(i,:,:)||_F$, a group LASSO norm equal to the sum of the F-norms of the horizontal slices (rows).
\end{definition}
Norms like these have been used to select for relevant groups of factors in regression problems \cite{RSSB:RSSB532}, and they tend to force the terms inside the F-norms to survive as a group if the corresponding multiplicands contribute enough to the model or to be driven to zero as a group otherwise. Here, we hope to select relevant samples for the reconstruction of each cluster. 

Let $\T{Y}$ denote the data tensor of size $H \times N \times D$ obtained by arranging the $H \times D$ images (data points) laterally as oriented matrices  $\ormat{Y}_n, \, n= 1,2,...,N$. Then the algorithm is based on the following principle. If we seek a sparse t-linear representation of an oriented matrix (image as a data point), then ideally only the matrices from the same submodule will contribute. In other words, samples from a submodule will provide an efficient generating set from the submodule.  The idea is shown in Figure \ref{fig:SSmC1}. 

This characterization allows us build an ``affinity tensor" $\T{W}$ by solving for the following optimization problem.
\begin{align}
\T{W} & =\arg \min_{\T{C}} \|\T{C}\|_{F1}+\lambda_h \| \T{C} \|_{FF1} + \lambda_g \|\T{Y}-\T{Y}\ast\T{C}\|_{F}^2 \nonumber \\
&  \text{ s.t. } \T{C}_{iik}=0, \,\, i,k = 1,2,...,N
\end{align}

For affine submodules \footnote{For our purposes, an affine submodule is a submodule translated away from the origin.}, we add the constraint $\sum_{i} \T{W}(i,j,:)= \V{e}_1$
Because the latter term is the identity element under convolution, this constraint gives the following property (proven below). Suppose the algorithm  discovers a self-representation $\ormat{X} = \sum_{i = 1 }^ N \tube{c}_i \ormat{Y}_i$ where $\tube{c}_i =0$ unless  $\ormat{Y}_i$  and $\ormat{X}$ share a cluster. Suppose the  whole cluster is translated by $\ormat{M}$ . Then, the constraint means the same set of coefficients gives an efficient representation for $\ormat{X}+\ormat{M}$ in terms of $\ormat{Y}_i + \ormat{M}$, because 
\begin{align*}
 \sum_{i} (\ormat{Y}_i+\ormat{M})\ast\T{W}_{ij}
&=\sum_{i} \ormat{Y}_i\ast\T{W}_{ij}+\sum_{i} \ormat{M}\ast\T{W}_{ij} =\sum_{i} \ormat{Y}_i\ast\T{W}_{ij}+\ormat{M}
\end{align*}
where for compactness, $\T{W}_{ij}=\T{W}(i,j,:)$. Submodules will then have a good chance of a successful self-representation for each datum, provided the coefficients from different clusters remain zero. The intuition is that hopefully the probability of the ÒbadÓ coefficients remaining zero will then increase.


\begin{figure}[htbp]
\centering 
  \includegraphics[height= 1.0 in]{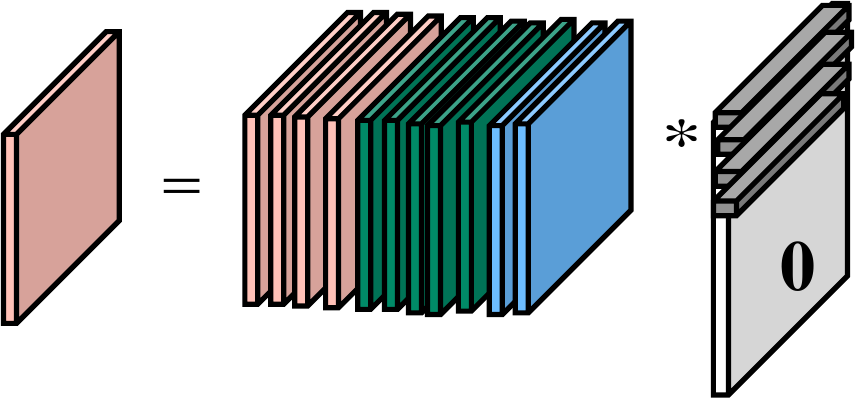}
  \caption{ \footnotesize{Sparsity in self-representation via t-product.} }
\label{fig:SSmC1}
\end{figure} 

Because of convolution-multiplication properties, the computations can be carried out most efficiently in the Fourier domain. Viewed in Fourier space, the equivalent task is to solve for
\begin{align}
&\hat{\T{W}} =\argmin_{\T{C}} \|\hat{\T{C}}\|_{F1}+\lambda_h\|\hat{\T{C}}\|_{FF1}+\lambda_g \|\hat{\T{Y}}-\hat{\T{Y}}\odot \hat{\T{C}}\|_{F}^2 \nonumber \\
& s. t. \,\,\, \hat{\T{C}}_{iik}=0\,\,\,; \sum_{i}\hat{\T{C}}_{ij}=[1,1,\cdots, 1].
\end{align}
Here, $\T{A}\odot \T{B}$ is the array resulting from face-wise matrix multiplication (see Section \ref{sec:tprod_fft}, notation).

Next, we build the affinity matrix \M{M} by taking tube-wise Frobenius norms of the affinity tensor and symmetricizing: 
\begin{equation}
\M{M}(i,j)=||\TW(j,i,:)||_F+||\TW(i,j,:)||_F.
\end{equation}

Finally, we run spectral clustering on the matrix $\M{M}$, inserting it at step two of the procedure on page two of \cite{njwspectral}. We chose to use the same clustering method used in \cite{Elhamifar:0zr}. It was proposed in \cite{njwspectral} and the version we use was implemented by the authors of \cite{verma}.

Our proposed algorithm is summarized by the pseudocode in Algorithm \ref{alg:SSmC}.
\begin{figure}[htbp]
    \begin{minipage}{\textwidth}
\begin{algorithm}[H]
\caption{Sparse sub-module clustering}
\label{alg:SSmC}
\begin{algorithmic}
\STATE \textbf{Input}: Data arranged into a 3-D tensor $\T{Y}$, number of clusters $k$.\\
\STATE 1. Solve the sparse self-representation optimization problem 
\begin{align*}
\T{W} & =\arg \min_{\T{C}} \|\T{C}\|_{F1}+\lambda_h \| \T{C} \|_{FF1} + \lambda_g \|\T{Y}-\T{Y}\ast\T{C}\|_{F}^2 \nonumber \\
&  \text{ s.t. } \T{C}_{iik}=0, \,\, i,k = 1,2,...,N
\end{align*}
 
\STATE 2. Find the affinity matrix $\M{M} \in \Real^{N \times N}$ using 
\begin{equation*}
\M{M}(i,j)=||\TW(j,i,:)||_F+||\TW(i,j,:)||_F.
\end{equation*}

\STATE 3. Apply spectral clustering (version from  \cite{verma}) to $\M{M}$. \\

\STATE \textbf{Output}: Obtain clusters from STEP 3.

\end{algorithmic}
\end{algorithm}
\end{minipage}
\end{figure}

\subsection{Performance Guarantees}
\label{sec:SSmC_Theorems}

We now develop three results to give conditions under which the sparse representation of a point will use only others from its own submodule. These results are analogous to the results on Sparse Subspace Clustering in \cite{Elhamifar:0zr}, which indicates the richness of proposed framework in its ability to borrow intuition from the traditional vector space setting.

The first theorem requires a strict condition before it can apply: the submodules must be linearly independent. The second and third establish conditions under which submodules may be merely disjoint. Note that linear independence is a special case of disjointness, as the union is contained within the sum. The peril with disjointness is that a t-linear combination of points from a pair of submodules $\mathbb{S}_1$ and $\mathbb{S}_2$ may no longer lie within $\mathbb{S}_1 \cup \mathbb{S}_2$, so disjointness may allow elements of $\mathbb{S}_3$ to be manufactured out of elements from $\mathbb{S}_1$ and $\mathbb{S}_2$.

For the case where $\mathbb{S}_i\cap\sum_{j\neq 1}^J \mathbb{S}_j =0$, i.e. independent submodules, the following result holds. The symbol $\sum$ indicates a sum of submodules as in Definition \ref{sum}.
\begin{theorem}
\label{thm:1}
Consider oriented data matrices $\{\ormat{Y}_n\}_{n=1}^N$ in $\mathbb{K}_D^H$. Suppose the data are within submodules $\{\mathbb{S}_j \}_{j=1}^J$ with dimensions $\{d_j\}_{j=1}^J$. Given $\ormat{Y}\in \mathbb{S}_i \subset \mathbb{K}_D^H$, denote by $\TY_i$ a tensor in $\mathbb{K}_D^{H\times(N_i-1)}$ whose lateral slices contain all data from $\mathbb{S}_i$ except $\TY$ itself. Denote by $\TY_{-i}$ a tensor in $\mathbb{K}_D^{H\times N-N_i}$ containing the data from other submodules.  (Here, $N_i$ is the number of data points in $\mathbb{S}_i$).
If the submodules are independent and $\left[\begin{array}{c}\ormat{C}^\ast\\\ormat{C}_-^\ast\end{array}\right]=\text{argmin}_{{\mathbb{K}_d^{h\times N-1}}}\left\|\left[\begin{array}{c}\ormat{C}\\\ormat{C}_-\end{array}\right]\right\|_{F1}$ s.t. $\ormat{Y}=[\TY_i \TY_{-i}]\ast\left[\begin{array}{c}\ormat{C}\\\ormat{C}_-\end{array}\right]$ then $\ormat{C}_-^\ast=0$.
\end{theorem}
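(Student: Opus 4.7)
The plan is to adapt the classical Sparse Subspace Clustering argument for independent subspaces (Elhamifar--Vidal) to the t-product / free-module setting. I proceed by contradiction: assume the optimizer has $\ormat{C}_-^\ast\neq\ormat{O}$ and exhibit a strictly cheaper feasible alternative, namely the zero-padded candidate $(\ormat{C}^\ast,\ormat{O})$.

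First I would split the feasibility constraint as $\ormat{Y} = \TY_i\ast\ormat{C}^\ast + \TY_{-i}\ast\ormat{C}_-^\ast$ and locate each summand in a predictable submodule. Since every lateral slice of $\TY_i$ lies in $\mathbb{S}_i$ and, by the definition of a submodule, $\mathbb{S}_i$ is closed under addition and under t-product multiplication by tubes, the t-linear combination $\TY_i\ast\ormat{C}^\ast$ again lies in $\mathbb{S}_i$; similarly $\TY_{-i}\ast\ormat{C}_-^\ast$ lies in $\sum_{j\neq i}\mathbb{S}_j$. Rearranging yields $\ormat{Y}-\TY_i\ast\ormat{C}^\ast = \TY_{-i}\ast\ormat{C}_-^\ast$, whose left-hand side lies in $\mathbb{S}_i$ (using $\ormat{Y}\in\mathbb{S}_i$ and closure under negation) while the right-hand side lies in $\sum_{j\neq i}\mathbb{S}_j$. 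The independence hypothesis $\mathbb{S}_i\cap\sum_{j\neq i}\mathbb{S}_j = \{\ormat{O}\}$ then forces both sides to be $\ormat{O}$; in particular $\TY_i\ast\ormat{C}^\ast = \ormat{Y}$, so the zero-padded pair $(\ormat{C}^\ast,\ormat{O})$ is itself feasible.

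Next I would compare objective values. Because $\|\cdot\|_{F1}$ is a sum of tubewise Frobenius norms, it is additive under vertical concatenation, so the optimal value equals $\|\ormat{C}^\ast\|_{F1} + \|\ormat{C}_-^\ast\|_{F1}$ whereas the alternative attains $\|\ormat{C}^\ast\|_{F1}$. Under the contradiction hypothesis $\ormat{C}_-^\ast\neq\ormat{O}$, at least one tube of $\ormat{C}_-^\ast$ has strictly positive Frobenius norm, so the alternative is strictly cheaper. This contradicts the optimality of $(\ormat{C}^\ast,\ormat{C}_-^\ast)$ and closes the argument.

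The step I would flag as the main point of caution in this setting — and the reason independence (rather than mere disjointness) is essential — is the temptation to infer $\ormat{C}_-^\ast = \ormat{O}$ directly from $\TY_{-i}\ast\ormat{C}_-^\ast = \ormat{O}$. That inference is generally unavailable over the tube ring $\mathbb{K}_D$, which contains zero divisors (tubes whose discrete Fourier transform vanishes in some slot), so $\TY_{-i}$ can have a nontrivial kernel even when its lateral slices generate $\sum_{j\neq i}\mathbb{S}_j$ as a free submodule. The $\|\cdot\|_{F1}$-minimization argument sidesteps this pitfall entirely: we never need to pin down $\ormat{C}_-^\ast$ as an algebraic consequence, only that replacing it by $\ormat{O}$ remains feasible and strictly reduces the objective, which is precisely what independence delivers.
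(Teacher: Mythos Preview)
Your proof is correct and follows essentially the same route as the paper's: assume $\ormat{C}_-^\ast\neq\ormat{O}$, use independence of the submodules to force $\TY_{-i}\ast\ormat{C}_-^\ast=\ormat{O}$ (hence $(\ormat{C}^\ast,\ormat{O})$ is feasible), and then observe that replacing $\ormat{C}_-^\ast$ by $\ormat{O}$ strictly lowers the $\|\cdot\|_{F1}$ objective. Your write-up is in fact more careful than the paper's on two points---the explicit additivity of $\|\cdot\|_{F1}$ under vertical concatenation, and the remark that one cannot deduce $\ormat{C}_-^\ast=\ormat{O}$ directly from $\TY_{-i}\ast\ormat{C}_-^\ast=\ormat{O}$ over $\mathbb{K}_D$---but the underlying argument is the same.
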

Note that $\ormat{Y}=\TY_i\ast \ormat{C}+\TY_{-i}\ast \ormat{C}_-$ by properties of the t-product. In other words, conformable block partitioning works the same way with the t-product that it does for matrices.

For cases where submodules are disjoint, but not linearly independent, we have the following result. 

\begin{theorem}
\label{thm:2}
Suppose the data are set up according to the hypotheses of the preceding theorem.
 We define a pair of auxiliary quantities: given $\ormat{X}$, 
\begin{equation*}
\ormat{A}_i=\text{argmin} ||\ormat{A}||_{F1}\ s.t.\ \TY_i\ast \ormat{A} =\ormat{X} \,\,\,\, \mbox{and}\,\,\,
\ormat{A}_{-i}=\text{argmin} ||\ormat{A}||_{F1}\ s.t.\ \TY_{-i}\ast \ormat{A} =\ormat{X} 
\end{equation*}

If the submodules are disjoint and $\forall \ormat{X}$, $||\ormat{A}_i||_{F1}<||\ormat{A}_{-i}||_{F1}$, then the minimum argument $\left[\begin{array}{c}\ormat{C}_i\\  \ormat{C}_{-i}\end{array}\right]=\text{argmin} \left\|\left[\begin{array}{c}\ormat{C}\\\ormat{C}_-\end{array}\right]\right\|_{F1}  s.t.\ \ormat{X}=\left[\begin{array}{cc}\TY_i &\TY_{-i}\end{array}\right]\left[\begin{array}{c}\ormat{C}\\\ormat{C}_-\end{array}\right]$ has a zero block: $\ormat{C}_{-i}=0$. Thus, only elements of $\mathbb{S}_i$ are used.
\end{theorem}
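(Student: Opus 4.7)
The plan is to argue by contradiction, mirroring the strategy used for SSC in the vector setting but replacing linear combinations with t-linear combinations and the $\ell_1$ norm with $\|\cdot\|_{F1}$. Suppose the joint minimizer $[\ormat{C}_i;\ormat{C}_{-i}]$ has $\ormat{C}_{-i}\neq\ormat{O}$. First, I would split the target as $\ormat{X}=\ormat{X}_1+\ormat{X}_2$, where $\ormat{X}_1 := \TY_i\ast\ormat{C}_i$ and $\ormat{X}_2 := \TY_{-i}\ast\ormat{C}_{-i}$. Because every lateral slice of $\TY_i$ belongs to $\mathbb{S}_i$ and submodules are closed under t-linear combinations (by definition), $\ormat{X}_1\in\mathbb{S}_i$; similarly $\ormat{X}_2\in\sum_{j\neq i}\mathbb{S}_j$. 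Since $\ormat{X}\in\mathbb{S}_i$ (the hypothesis inherited from Theorem~\ref{thm:1}), subtracting gives $\ormat{X}_2=\ormat{X}-\ormat{X}_1\in\mathbb{S}_i$ as well.

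Next, I would apply the standing hypothesis $\|\ormat{A}_i\|_{F1}<\|\ormat{A}_{-i}\|_{F1}$ to the particular point $\ormat{X}_2$. Note that $\ormat{C}_{-i}$ itself is a feasible representation of $\ormat{X}_2$ through $\TY_{-i}$, so by definition of $\ormat{A}_{-i}$ as the F1-minimizer we have $\|\ormat{A}_{-i}(\ormat{X}_2)\|_{F1}\leq \|\ormat{C}_{-i}\|_{F1}$. Meanwhile, since $\ormat{X}_2\in\mathbb{S}_i$ and the data slices in $\TY_i$ generate $\mathbb{S}_i$, the quantity $\ormat{A}_i(\ormat{X}_2)$ is well-defined. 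Chaining the hypothesis then yields $\|\ormat{A}_i(\ormat{X}_2)\|_{F1}<\|\ormat{C}_{-i}\|_{F1}$.

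Finally, I would construct a strictly cheaper feasible competitor for the joint problem, namely $\left(\ormat{C}_i+\ormat{A}_i(\ormat{X}_2),\ \ormat{O}\right)$. Feasibility is immediate:
\[ \TY_i\ast\bigl(\ormat{C}_i+\ormat{A}_i(\ormat{X}_2)\bigr)+\TY_{-i}\ast\ormat{O} \;=\; \ormat{X}_1+\ormat{X}_2 \;=\; \ormat{X}. \]
Since $\|\cdot\|_{F1}$ sums tube-wise Frobenius norms, the $F1$ norm of a vertically stacked tensor is additive across the stacking. Combining this with the triangle inequality for $\|\cdot\|_{F1}$,
\[ \bigl\|\bigl[\ormat{C}_i+\ormat{A}_i(\ormat{X}_2);\ \ormat{O}\bigr]\bigr\|_{F1} \leq \|\ormat{C}_i\|_{F1}+\|\ormat{A}_i(\ormat{X}_2)\|_{F1} < \|\ormat{C}_i\|_{F1}+\|\ormat{C}_{-i}\|_{F1}, \]
which contradicts the optimality of $[\ormat{C}_i;\ormat{C}_{-i}]$ in the joint problem.

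The main obstacle I anticipate is the well-definedness of $\ormat{A}_i(\ormat{X}_2)$: the argument requires that every element of $\mathbb{S}_i$ (in particular $\ormat{X}_2$) lie in the range of $\TY_i$, which is the tacit assumption that the data within cluster $i$ generate the $i$-th submodule. A second, smaller piece of care is the degenerate case $\ormat{X}_2=\ormat{O}$, in which case the claim follows even more directly: zeroing out $\ormat{C}_{-i}$ preserves feasibility and strictly decreases the objective as long as any tube of $\ormat{C}_{-i}$ is nonzero. Beyond these bookkeeping issues the argument is a clean combination of submodule closure, the triangle inequality for $\|\cdot\|_{F1}$, and the hypothesized strict F1-gap between the two representations.
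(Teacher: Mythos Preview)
Your argument is correct and matches the paper's own proof essentially step for step: the paper also assumes $\ormat{C}_{-i}\neq 0$, sets (in your notation) $\ormat{X}_2=\TY_{-i}\ast\ormat{C}_{-i}$, invokes the optimality of $\ormat{A}_{-i}$ and the hypothesis $\|\ormat{A}_i\|_{F1}<\|\ormat{A}_{-i}\|_{F1}$ at this point, and builds the strictly cheaper competitor $[\ormat{C}_i+\ormat{A}_i;\,0]$ using additivity of $\|\cdot\|_{F1}$ across the stack and the triangle inequality. Your added remarks on the existence of $\ormat{A}_i(\ormat{X}_2)$ and on the degenerate case $\ormat{X}_2=\ormat{O}$ are points the paper leaves implicit.
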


The performance guarantees are derived in terms of \emph{angles} between 2-D data and coherency between submodules. We introduce two definitions to relay these notions in our algebraic setting.

Let $\theta\in\mathbb{R}^{1\times 1\times D}$ represent a tubal angle as defined in \cite{Kilmer:2013kx} , i.e. $\mbox{\tt cos}(\theta(\ormat{A},\ormat{B}))=\frac{\ormat{A}^T\ast\ormat{B}+\ormat{B}^T\ast\ormat{A}}{2||\ormat{A}||_F||\ormat{B}||_F}$ for $\ormat{A}$ and $\ormat{B}$ oriented matrices in $\mathbb{K}^{H}_D$. Cosine here acts on each entry of the tube individually.

\begin{definition}For submodules $\{\mathbb{S}_j\}_{j=1}^J$, define the tubal-coherence of $\mathbb{S}_i$ and $\mathbb{S}_j$ to be  
\begin{equation}
c_{ij}=\max_{\ormat{V}_i\in \mathbb{S}_i, \ormat{V}_j\in \mathbb{S}_j}\{||\mbox{\rm{cos} }(\theta(\ormat{V}_i,\ormat{V}_j))||_F\}.
\end{equation}
\end{definition}
We now have the following lemma which describes an important linear algebraic property (whose proof can be found in the supplementary material).
\begin{lemma}
\label{lem:span}
Suppose $\TY_i\in\mathbb{K}_D^{H\times d_i}$ has lateral slices drawn from a submodule of (submodular) dimension $d_i$ and that none of the Fourier-domain frontal slices of $\TY_i$ have any zero singular values. Then, the lateral slices of $\TY_i$ form a generating set for the submodule. \label{span_lemma}
\end{lemma}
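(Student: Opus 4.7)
The plan is to work in the Fourier domain, where the t-product becomes face-wise matrix multiplication, and leverage the correspondence between t-linear independence of a generating set and full column rank of the Fourier-domain frontal slices.

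First, since the submodule has submodular dimension $d_i$, by definition it admits a t-linearly independent generating set, which I will call $\TG_i \in \mathbb{K}_D^{H \times d_i}$. Every lateral slice of $\TY_i$ lies in the submodule, so each can be expanded as a t-linear combination of the lateral slices of $\TG_i$. Collecting these expansions gives a factorization
\begin{equation*}
\TY_i = \TG_i \ast \TR, \qquad \TR \in \mathbb{K}_D^{d_i \times d_i}.
\end{equation*}
Taking the FFT along the third mode turns this into $\hat{\TY}_i(:,:,d) = \hat{\TG}_i(:,:,d)\, \hat{\TR}(:,:,d)$ for every frontal slice $d = 1,\dots,D$.

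Next I would argue that each $\hat{\TR}(:,:,d)$ is invertible. Here I need the intermediate fact (worth stating as a short sub-step) that t-linear independence of the columns of $\TG_i$ is equivalent to the claim that each Fourier frontal slice $\hat{\TG}_i(:,:,d)$ has linearly independent columns: this follows because $\TG_i \ast \tube{c} = 0$ Fourier-transforms to a collection of $D$ independent linear systems $\hat{\TG}_i(:,:,d)\,\hat{\tube{c}}(d) = 0$, and each must force $\hat{\tube{c}}(d)=0$. Hence every $\hat{\TG}_i(:,:,d)$ has full column rank $d_i$. By the Lemma's hypothesis, each $\hat{\TY}_i(:,:,d)$ also has full column rank $d_i$ (no zero singular values on an $H\times d_i$ matrix). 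The factorization $\hat{\TY}_i(:,:,d) = \hat{\TG}_i(:,:,d)\,\hat{\TR}(:,:,d)$ combined with full column rank of both sides forces $\hat{\TR}(:,:,d)$ to be a $d_i \times d_i$ invertible matrix. Assembling these face-wise inverses via inverse FFT produces a tensor $\TR^{-1} \in \mathbb{K}_D^{d_i \times d_i}$ that is a two-sided t-product inverse of $\TR$.

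Finally, I would close the argument by showing any $\ormat{X}$ in the submodule can be written as a t-linear combination of lateral slices of $\TY_i$. Since $\TG_i$ generates the submodule, there exists $\TS \in \mathbb{K}_D^{d_i \times 1}$ with $\ormat{X} = \TG_i \ast \TS$. Substituting $\TG_i = \TY_i \ast \TR^{-1}$ (from the established invertibility) yields
\begin{equation*}
\ormat{X} = \TY_i \ast (\TR^{-1} \ast \TS),
\end{equation*}
exhibiting $\ormat{X}$ as a t-linear combination of the lateral slices of $\TY_i$. This is exactly what it means for those lateral slices to form a generating set.

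The main obstacle is the bookkeeping around what ``submodular dimension'' buys us: one needs the equivalence between t-linear independence of a list of oriented matrices and the face-wise full-rank condition in the Fourier domain, which although stated in previous work, is the conceptual hinge that lets the full-rank hypothesis on $\hat{\TY}_i$ translate into invertibility of $\TR$ and hence into the generating property. The rest is just FFT-domain linear algebra applied slice by slice.
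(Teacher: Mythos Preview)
Your proof is correct and takes a genuinely different route from the paper's argument. The paper argues by contradiction: it first computes the dimension of $\mathbb{S}_i$ as a complex vector space (namely $D\cdot d_i$) by decomposing it, via the Fourier transform, into a direct sum of $D$ face-wise column spaces, each of complex dimension $d_i$; it then supposes some $\ormat{X}\in\mathbb{S}_i$ is not in the t-span of $\TY_i$ and shows that at the offending Fourier face the augmented matrix $[\,\widehat{\ormat{X}}^{(n)}\mid \widehat{\TY}_i^{(n)}\,]$ would have $d_i+1$ linearly independent columns, inflating the dimension at that face and contradicting the count. Your argument is instead direct and constructive: you factor $\TY_i=\TG_i*\TR$ through a t-linearly independent generating set, use the face-wise full-rank hypothesis on both $\widehat{\TG}_i$ and $\widehat{\TY}_i$ to force every Fourier slice $\widehat{\TR}(:,:,d)$ to be invertible, assemble a genuine t-inverse $\TR^{-1}$, and exhibit explicit coefficients $\TR^{-1}*\TS$ for an arbitrary $\ormat{X}=\TG_i*\TS$. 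Both proofs hinge on the same underlying equivalence (t-linear independence of a list of oriented matrices $\Leftrightarrow$ every Fourier frontal slice has full column rank), but your change-of-basis argument is shorter and yields the representing coefficients explicitly, while the paper's dimension-counting argument gives a bit more structural information about $\mathbb{S}_i$ as a direct sum of face-wise subspaces. One small point worth tightening in your write-up: you only explicitly justify the implication ``full rank at each face $\Rightarrow$ t-linear independence,'' whereas you actually need the converse for $\TG_i$; this converse does hold (a rank-deficient face yields, via conjugate symmetry of the FFT, a nonzero real $\ormat{c}$ in the t-null space), and is used without comment in the paper as well.
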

We can now state the main result on condition for exact submodule clustering.
\begin{theorem}
\label{thm:3}
Suppose the data are set up as in the preceding theorems.
Let $W_i$ denote the set of all size $H\times d_i\times D$, full-rank sub-tensors of $\TY_i$. In other words, every element of $W_i$ is a tensor in $\mathbb{K}_D^{H\times d_i}$ whose lateral slices are drawn from $\TY_i$, and an element of $W_i$, when viewed in the Fourier domain, has frontal faces with full rank.

For any $\ormat{X}\in  \mathbb{S}_i\cap [\sum_{j=1} \mathbb{S}_j]$, suppose the condition 
\begin{equation*}
\sqrt{d_i}\cdot \max_{i\neq j}  (c_{ij})\cdot \sigma_{\max}(\bcirc(\TY_{-i}))<\max_{\tilde{\TY_i}\in W_i}\{\sigma_{\min}(\bcirc(\tilde{\TY_i}))\}
\end{equation*}
holds, where $\sigma_{\min}(\cdot)$ and $\sigma_{\max}(\cdot)$ return the maximum and the minimum singular values respectively. Note that both will be nonzero due to the full-rank assumption. Then Theorem~\ref{thm:2} will apply, or in other words $||\ormat{A}_{i}||_{F1}<||\ormat{A}_{-i}||_{F1}$.

\end{theorem}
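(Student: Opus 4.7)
The strategy is to sandwich $\|\ormat{A}_i\|_{F1} < \|\ormat{A}_{-i}\|_{F1}$ and then invoke \Thm{2}. My two working tools are the block-circulant identity $\Unfold{\TA \ast \ormat{B}} = \bcirc(\TA)\,\Unfold{\ormat{B}}$ together with the isometry $\|\ormat{B}\|_F = \|\Unfold{\ormat{B}}\|_2$, and the block-diagonalization of $\bcirc(\TA)$ by the DFT, which gives $\sigma_{\max}(\bcirc(\TA))=\max_d \sigma_{\max}(\hat{\TA}(:,:,d))$ and analogously for $\sigma_{\min}$. Every step then reduces to a face-by-face matrix computation in Fourier domain.

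For the upper bound on $\|\ormat{A}_i\|_{F1}$, I would pick $\tilde{\TY}_i \in W_i$ attaining $\max_{\tilde{\TY}_i\in W_i}\sigma_{\min}(\bcirc(\tilde{\TY}_i))$. By \Lem{span}, its lateral slices generate $\mathbb{S}_i$, so because $\ormat{X}\in\mathbb{S}_i$ there is a $\tilde{\ormat{A}}\in\mathbb{K}_D^{d_i}$ with $\tilde{\TY}_i\ast\tilde{\ormat{A}}=\ormat{X}$. Unfolding gives $\|\tilde{\ormat{A}}\|_F\leq\|\ormat{X}\|_F/\sigma_{\min}(\bcirc(\tilde{\TY}_i))$, and Cauchy--Schwarz across the $d_i$ tubes of $\tilde{\ormat{A}}$ yields $\|\tilde{\ormat{A}}\|_{F1}\leq\sqrt{d_i}\,\|\tilde{\ormat{A}}\|_F$. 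Zero-padding the slots of $\TY_i$ that do not appear in $\tilde{\TY}_i$ produces a feasible coefficient tensor for the $\ormat{A}_i$ problem with the same $F1$-norm, so by minimality
\[
\|\ormat{A}_i\|_{F1}\ \leq\ \frac{\sqrt{d_i}\,\|\ormat{X}\|_F}{\max_{\tilde{\TY}_i\in W_i}\sigma_{\min}(\bcirc(\tilde{\TY}_i))}.
\]

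For the lower bound on $\|\ormat{A}_{-i}\|_{F1}$, I would first note $\|\ormat{A}_{-i}\|_{F1}\geq\|\ormat{A}_{-i}\|_F$ since the $\ell_1$-norm of the vector of tube F-norms dominates its $\ell_2$-norm. Let $\T{P}_i$ denote the orthogonal tubal projector onto $\mathbb{S}_i$, defined face-by-face in the Fourier domain as the standard projection onto the corresponding complex subspace. Since $\ormat{X}\in\mathbb{S}_i$, projecting both sides of $\ormat{X}=\TY_{-i}\ast\ormat{A}_{-i}$ gives $\ormat{X}=(\T{P}_i\ast\TY_{-i})\ast\ormat{A}_{-i}$, whence $\|\ormat{X}\|_F\leq\sigma_{\max}(\bcirc(\T{P}_i\ast\TY_{-i}))\cdot\|\ormat{A}_{-i}\|_F$. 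The crucial coherence step is $\sigma_{\max}(\bcirc(\T{P}_i\ast\TY_{-i}))\leq\max_{i\neq j}c_{ij}\cdot\sigma_{\max}(\bcirc(\TY_{-i}))$: in the Fourier domain this becomes a face-wise operator-norm bound on the projection of $\hat{\TY}_{-i}(:,:,d)$ onto the $d$-th complex-subspace face of $\mathbb{S}_i$, and the tubal cosine inequality $|\widehat{\cos(\theta)}_d|\leq\|\cos(\theta)\|_F\leq c_{ij}$ guarantees that every face-wise cosine is controlled by $\max_{i\neq j}c_{ij}$, so the maximum over $d$ respects the global coherence.

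Combining the two bounds and clearing $\|\ormat{X}\|_F$ (the case $\ormat{X}=\ormat{O}$ being trivial) produces exactly the theorem hypothesis, forcing $\|\ormat{A}_i\|_{F1}<\|\ormat{A}_{-i}\|_{F1}$ and activating \Thm{2}. The main obstacle is the lower-bound chain: rigorously defining the tubal projector $\T{P}_i$, verifying that it commutes with the t-product on the left (cleanest when carried out in the Fourier domain), and in particular converting the global tubal cosine bound $\|\cos(\theta)\|_F\leq c_{ij}$ into a face-wise operator-norm inequality that still holds after one takes the worst Fourier frequency in the block-diagonal representation of $\bcirc(\cdot)$. The upper-bound half, by contrast, is essentially routine once \Lem{span} is available.
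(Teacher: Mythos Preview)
Your upper bound on $\|\ormat{A}_i\|_{F1}$ is exactly the paper's argument: invoke \Lem{span} for a chosen $\tilde{\TY}_i\in W_i$, unfold to get $\|\tilde{\ormat{A}}\|_F\le \|\ormat{X}\|_F/\sigma_{\min}(\bcirc(\tilde{\TY}_i))$, use $\|\cdot\|_{F1}\le\sqrt{d_i}\,\|\cdot\|_F$, zero-pad, and minimize over $W_i$. Nothing to add there.

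The lower bound is where you diverge, and where there is a genuine gap. Your projector route needs, face by face, $\sigma_{\max}\bigl(\hat P_i^{(d)}\hat\TY_{-i}^{(d)}\bigr)\le c\cdot\sigma_{\max}\bigl(\hat\TY_{-i}^{(d)}\bigr)$ with $c=\max_{j\neq i}c_{ij}$. You justify this with $|\widehat{\cos(\theta)}_d|\le\|\cos(\theta)\|_F$, but that inequality is false for the unnormalized DFT used in the t-product (it is off by $\sqrt{D}$), and even modulo scaling the two objects are not comparable: the tubal cosine normalizes by the \emph{global} Frobenius norms $\|\ormat{V}_i\|_F\|\ormat{V}_j\|_F$, whereas the face-wise principal-angle cosine that controls $\|\hat P_i^{(d)}w\|$ normalizes by the \emph{face-wise} norms $\|\hat{\ormat{V}}_i^{(d)}\|\,\|\hat{\ormat{V}}_j^{(d)}\|$. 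There is no clean way to pass from the former to the latter uniformly in $d$, so the coherence bound you need does not follow from the definition of $c_{ij}$.

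The paper avoids the face-wise detour entirely. It first proves the tube-level inequality $\|\ormat{X}\|_F^2\le\|\ormat{X}^\top\ast\ormat{X}\|_F$ (a short Fourier-domain calculation), then substitutes $\TY_{-i}\ast\ormat{A}_{-i}$ for one copy of $\ormat{X}$ and recognizes the resulting tube as $\cos\bigl(\theta(\ormat{X},\TY_{-i}\ast\ormat{A}_{-i})\bigr)\cdot\|\ormat{X}\|_F\,\|\TY_{-i}\ast\ormat{A}_{-i}\|_F$ straight from the definition of the tubal angle. Taking the $F$-norm of that tube and bounding $\|\cos(\theta)\|_F\le\max_{j\neq i}c_{ij}$ is then immediate, because $c_{ij}$ is \emph{defined} as a supremum of exactly such $F$-norms; no per-frequency control is ever needed. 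Finishing with $\|\TY_{-i}\ast\ormat{A}_{-i}\|_F\le\sigma_{\max}(\bcirc(\TY_{-i}))\|\ormat{A}_{-i}\|_F\le\sigma_{\max}(\bcirc(\TY_{-i}))\|\ormat{A}_{-i}\|_{F1}$ gives the lower bound. If you want to salvage your approach, the cleanest fix is to abandon the projector and adopt this direct-cosine route.
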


\section{Numerical Results}
\label{sec:Sim}

\textbf{Synthetic data} - In order to assess speed and reliability on modestly sized datasets, synthetic data were generated lying along multiple submodules. One set of test parameters was chosen to mimic, in size, a subset of the MNIST handwritten digit dataset. Each synthetic image is 28 by 28, with images distributed along 4 clusters. Total dataset size was varied from 40 to 200 ``images.'' The other set of test parameters was chosen to be larger, on the scale of a face database. Each synthetic image is 200 by 200, with images distributed along 3 clusters. Total dataset size was varied from 30 to 150 ``images.'' Runtime in seconds for both tests is displayed in Table \ref{fig:synth_runtimes}. In our tests of SSmC, misclassification rate was zero for all synthetic data. 

\begin{figure}[h!]
\centering
\begin{tabular}{|l|c|c|c|c|c|}
  \hline
  MNIST & 16.2581 & 60.4360 & 290.1172 & 731.5302 &  539.5757  \\ \hline
  Face &    
74.1688 & 139.7167 & 231.1336 & 675.3414 & 2868.800 \\ \hline
\end{tabular}
\caption{\footnotesize{Run times for the SSmC algorithm implemented using Alternating Direction Method of Multipliers (ADMM) \cite{Boyd_ADMM}.\label{fig:synth_runtimes}}}
\vspace{-3mm}
\end{figure}



\textbf{Real data} - All of the tests in this section use an implementation of SSC contained as a special case of our implementation of SSmC. Our implementation lacks one portion of the SSC algorithm as presented in \cite{Elhamifar:0zr}: we do not make a provision for sparse outlying entries, which \cite{Elhamifar:0zr} does by solving this problem.
 \begin{align}
\M{W} & =\arg \min_{\M{C}} \|\M{C}\|_{F1}+\lambda_h \| \M{C} \|_{FF1}+\lambda_z \|\M{E}\|_{1} + \lambda_g \|\M{Y}-\M{Y}\M{C}+\M{E}\|_{F}^2 \nonumber \\
&  \text{ s.t. } \M{C}_{ii}=0, \,\, i,k = 1,2,...,N
\end{align}
The difference of note is the matrix $\M{E}$, penalized with an $\ell_1$ norm to promote sparsity. To be clear, we could incorporate this extra term into SSmC and SSC, but this more detailed comparison is left for future work. 

The t-product has performed well in the past as a tool for face recognition \cite{Hao:2013uq}. It is also known that images of the same object under various lighting conditions approximately form a low-dimensional subspace \cite{Basri:2001uq}, \cite{Ramamoorthi}. So in the following we test the performance of the method on clustering images from various Data Bases (DB).

\textbf{(1) Weizmann Face Data Base \cite{Weizman}}- In this experiment the aim is to group together faces regardless of lighting condition (same pose). Setting $\lambda_h =0$, we selected $\lambda_g$ with a training set of four faces at four lighting conditions, monitoring quality via a heuristic. Using those values, a test set of 28 faces at four illuminations was successfully grouped using SSmC and SSC into twenty-eight segments by SSmC, each containing only one person (no corresponding figure). 

In another series of tests with no training stage, we find that SSmC displays robustness to the choice of $\lambda_g$. This time, 36 images were used: four people, each at nine lighting conditions. SSC succeeded only in a narrow range with $\lambda_g$ between $10^{-6} $ and $10^{-7}$, while SSmC succeeds with $\lambda_g$ between $10^{-4} $ and $10^{-8}$. While this may not be a useful end in itself, we take it as a sign that a \emph{union of submodules is a reasonable model for a database of this form and that the added complexity of the t-product may be warranted}. Furthermore, SSmC can withstand a level of additive noise that foils SSC on the 36 images in the preceding paragraph. Pixel values ranged from zero to 255, and isotropic Gaussian noise of standard deviation 20 was added. SSmC succeeded for $\lambda_g$ between $10^{-7.5} $ and $10^{-8}$. All trials were run with SSC using identical parameters and noise instantiations, and SSC was unable to cluster the faces.

\begin{figure}[htbp]
\centering 
\vspace{-2.5cm}
  \includegraphics[height= 4 in]{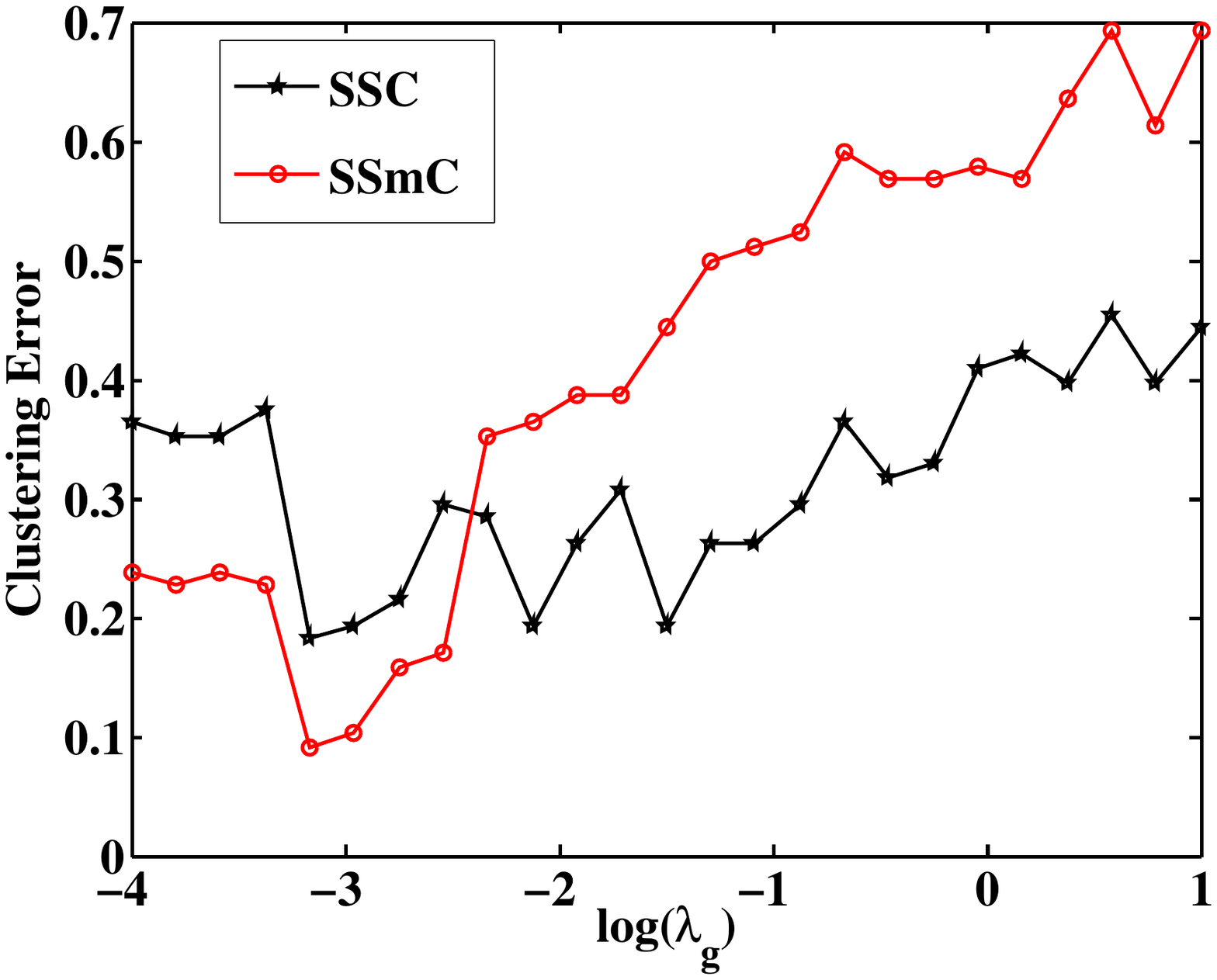}
  \vspace{-2cm}
  \caption{ \footnotesize{Clustering error performance on original Yale DB.} }
\label{fig:SSmC_SSC_Yale}
\end{figure} \textbf{(2) Yale Face Data Bases \cite{Yale_DB1,Yale_DB2}} - We tested the approach on two Yale Data Bases, original \cite{Yale_DB1} and extended \cite{Yale_DB2}.  The original data base contains contains 165 grayscale images of 15 individuals. There are 11 images per subject, one per different facial expression or configuration: center-light, w/glasses, happy, left-light, w/no glasses, normal, right-light, sad, sleepy, surprised, and wink. To reduce the processing time, we downsampled images from the original size of $240\times 320$ to $81\times 107$ and we picked the images corresponding to the first 8 subjects. \emph{No other preprocessing in terms of centering, rotating etc. was performed}. Figure \ref{fig:SSmC_SSC_Yale} illustrates the clustering error for various values of $\lambda_g$ for SSmC Vs SSC. Note that at optimal $\lambda_g$ the clustering error for SSmC is $0.09$ whereas clustering error for SSC is $0.18$. 

\begin{figure}[htbp]
\centering 
\vspace{-2.5cm}
  \includegraphics[height= 4 in]{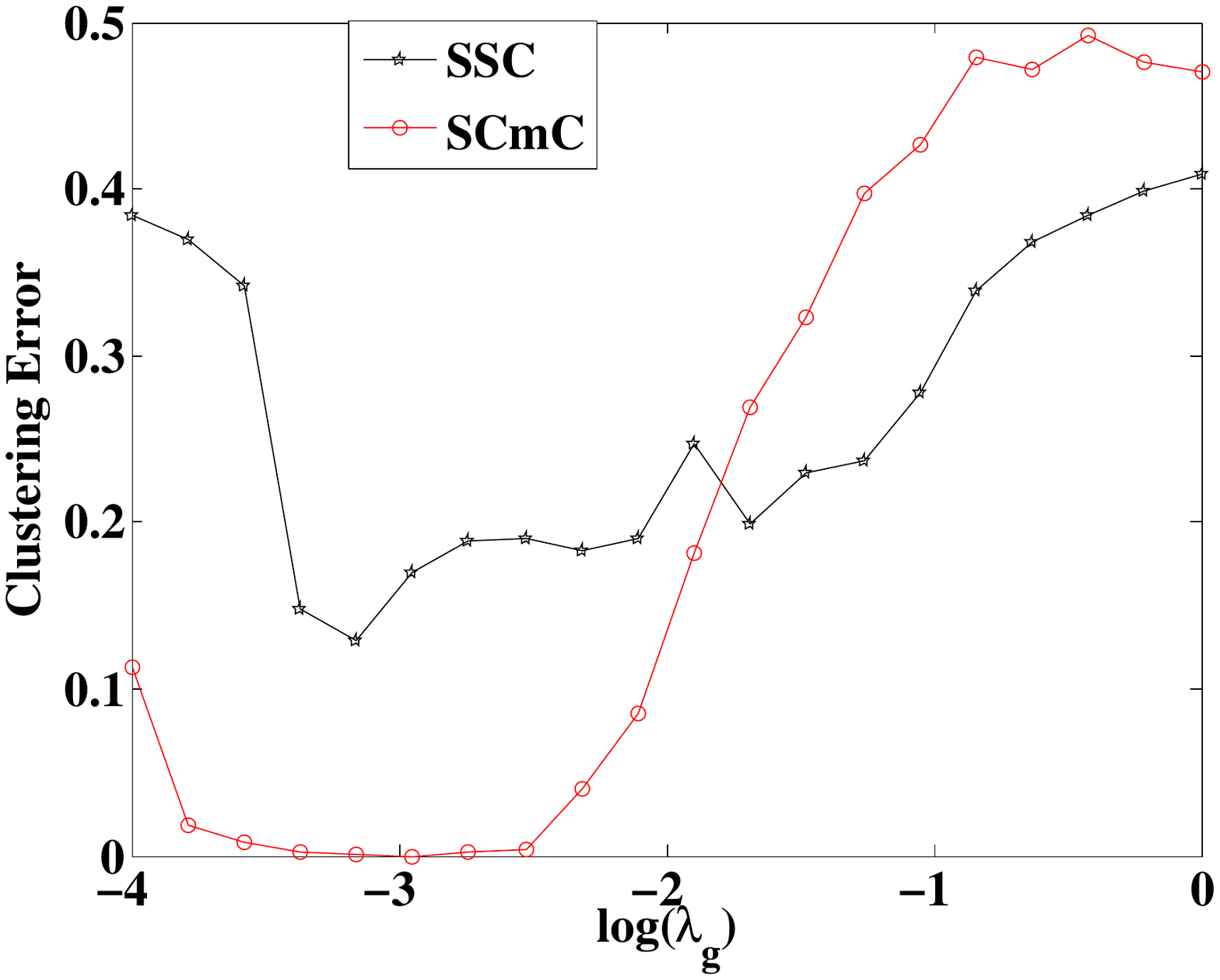}
  \vspace{-2.5cm}
  \caption{Clustering error performance on Extended Yale DB. }
\label{fig:SSmC_SSC_Yale_Ext}
\end{figure} 
We next performed experiments with Extended Yale Data Base. For this experiment we again reduced the size of the images from  $480 \times 640$ to $120 \times 111$ by first downsampling by a factor of $4$ along each axis and then cropping the columns (in order to approximately focus on the faces) by only keeping the indices from $30:140$. We normalize the pixel intensity to lie in $[0,1]$. We then took subject numbers $11,12,15,16,17$ and randomly picked 25 images per person from the $65\times8 = 520$ images taken under various poses and different illuminations. Figure \ref{fig:SSmC_SSC_Yale_Ext} shows the clustering performance of SSmC vs SSC averaged over 20 trials. The SSmC was able to achieve perfect clustering while SSC was not. We believe that this is due to the shift invariant nature of the SSmC method, which requires less preprocessing. This is further borne out by the following experiment on the MNIST handwritten character clustering problem.

\begin{figure}[htbp]
\centering 
\vspace{-2.5cm}
  \includegraphics[height= 4 in]{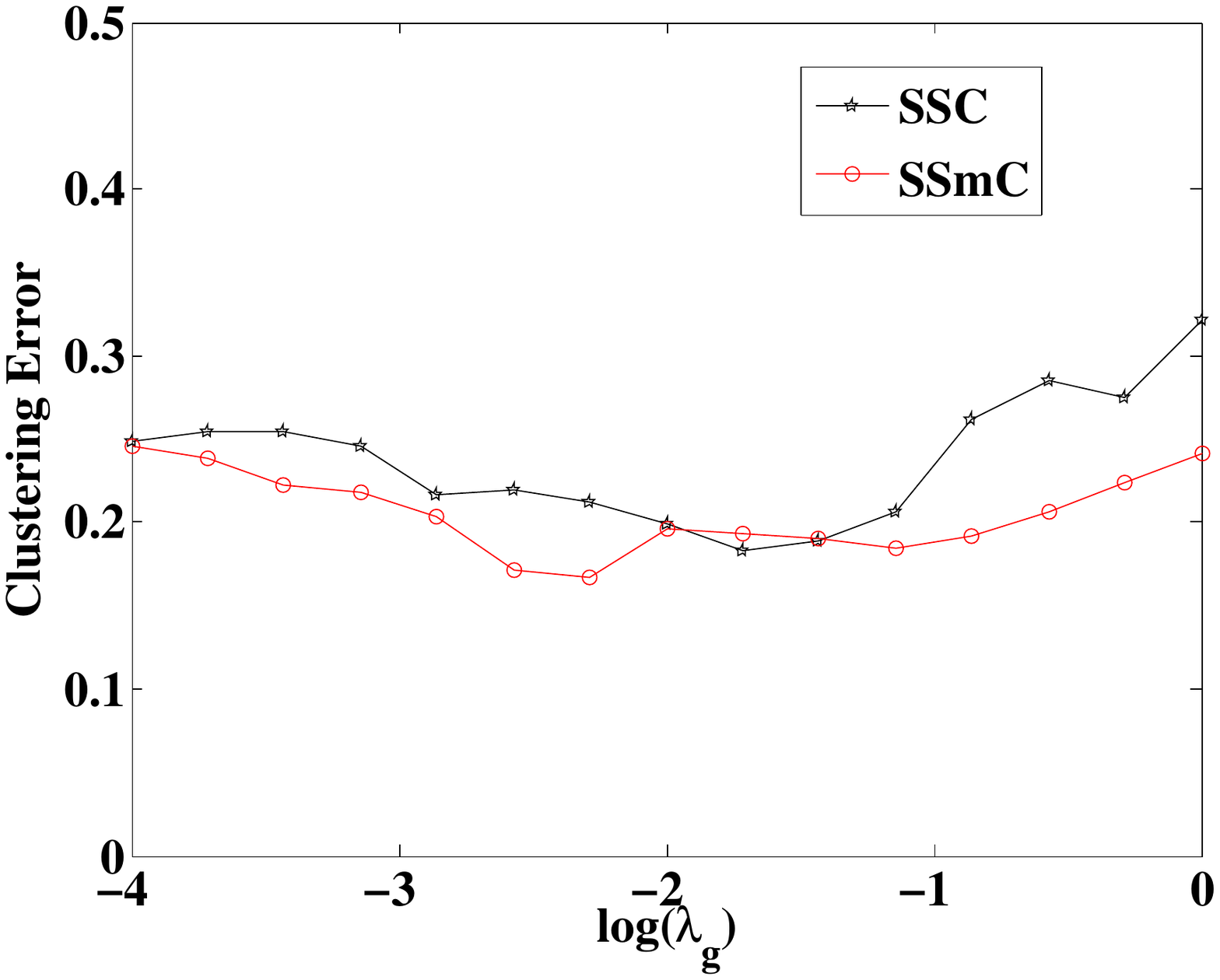}
  \vspace{-2.5 cm}
  \caption{ Clustering error performance on MNIST DB. }
  \vspace{0cm}
\label{fig:SSmC_SSC_MNIST}
\end{figure} \textbf{(3) MNIST handwritten Data Base \cite{MNIST}} - In this experiment we seek to cluster digits $2,4,8$ by randomly taking $20$ instances for each digit from the labeled data. The clustering error performance curve, averaged over $25$ such random trials, is shown in Figure \ref{fig:SSmC_SSC_MNIST}.  It turns out that for this data set SSC remains competitive to SSmC, with SSmC being slightly better at optimal $\lambda_g$. 

In order to check the shift invariance property of SSmC we randomly shifted the digits horizontally, uniformly with respect to the center by $6$ pixels either side. In this case for the best value of $\lambda_g$ SSmC exhibited a clustering error of $0.2$ while clustering error for SSC remained at $0.5$ for all values $\lambda_g$. This implies that \emph{SSmC is more robust to processing data without needing to preprocess for centering and alignment}. 

\section{Conclusions and Future work}

We presented a novel method to cluster 2-D data while preserving the multi-way aspect of the data. Initial results show robustness compared to existing approaches. In future we plan to carry out experiments with 3-D data sets, such as the American Sign Language Lexicon Video Dataset (ASLLVD). On the computational front, the current implementation of SSmC does not use multiple processors, and it is not suited to large datasets, as it takes minutes to run on the test examples. However, we believe the algorithm can be effectively parallelized. During this project, the paper \cite{HeckelB13_ISIT} proposed a fast alternative to SSC, and their work could be extended to quickly perform submodule clustering as well.

\section{Appendix}
\subsection{Proof of Theorem \ref{thm:1}}

\begin{proof}
If $\ormat{C}_-^\ast\neq0$, then $\ormat{Y}=\TY_i\ast \ormat{C}^\ast+\TY_{-i}\ast \ormat{C}_-^\ast\Rightarrow \TY_{-i}\ast \ormat{C}_-^\ast=\TY_{i}\ast \ormat{C}^\ast+\ormat{Y}\in \mathbb{S}_i$, since the module generated by the lateral slices of $\TY_{-i}$ is in $\oplus_{i\neq j} \mathbb{S}_j$, it must be the case that $\TY_{-i}\ast \ormat{C}_-^\ast\in \oplus_{i\neq j} \mathbb{S}_j$, also, and so $\TY_{-i}\ast \ormat{C}_-^\ast$ is in the intersection and must be zero. Thus $\ormat{Y}=\TY_i\ast \ormat{C}^\ast+0$, and $\ormat{C}_-^\ast$ may be replaced by zero. This lowers the F1 norm, contradicting the assumption that $\left[\begin{array}{c}\ormat{C}^\ast\\\ormat{C}_-^\ast\end{array}\right]$ is optimal.
\end{proof}

\subsection{Proof of Theorem \ref{thm:2}}

\begin{proof}
Suppose $\ormat{C}_{-i}\neq0$. Using the hypotheses of the theorem, we will derive a contradiction by constructing better coefficients. Let $\ormat{X}=\TY_{-i}\ast \ormat{C}_{-i}=\ormat{Y}-\TY_i\ast \ormat{C}_i$. Then $\TY_i\ast \ormat{A}_i=\ormat{X}=\TY_{-i}\ast \ormat{A}_{-i}=\ormat{Y}-\TY_i\ast \ormat{C}_i$, so 
\begin{align}
\ormat{Y} & =\left[\begin{array}{cc}\TY_i &\TY_{-i}\end{array}\right]\left[\begin{array}{c}\ormat{A}_i+\ormat{C}_i^\ast\\0\end{array}\right]\\
& =\left[\begin{array}{cc}\TY_i &\TY_{-i}\end{array}\right]\left[\begin{array}{c}\ormat{C}_i^\ast\\\ormat{A}_{-i}\end{array}\right]=\left[\begin{array}{cc}\TY_i &\TY_{-i}\end{array}\right]\left[\begin{array}{c}\ormat{C}_i\\\ormat{C}_{-i}\end{array}\right]
\end{align}
Also, $\TY_{i}\ast \ormat{A}_{-i}=\TY_{-i}\ast \ormat{C}_{-i}$, so $||\ormat{A}_{-i}||_{F1}<||\ormat{C}_{-i}||_{F1}$ by optimality of $\ormat{A}_{-i}$. Using the fact that $\left\|\left[\begin{array}{c}\ormat{V}_1\\\ormat{V}_2\end{array}\right]\right\|_{F1}=||\ormat{V}_1||_{F1}+||\ormat{V}_2||_{F1}$, 
\begin{align*}
\left\|\left[\begin{array}{c}\ormat{A}_i+\ormat{C}_i\\ 0\end{array}\right]\right\|_{F1}
&\leq ||\ormat{A}_i||_{F1}+||\ormat{C}_i||_{F1}\\
&<||\ormat{A}_{-i}||_{F1}+||\ormat{C}_i||_{F1}\\
&\leq ||\ormat{C}_{-i}||_{F1}+||\ormat{C}_i||_{F1}\\
&=\left\|\left[\begin{array}{c}\ormat{C}_i\\\ormat{C}_{-i}\end{array}\right]\right\|_{F1}
\end{align*}

Since $\left[\begin{array}{c}\ormat{C}_i\\\ormat{C}_{-i}\end{array}\right]$ was supposed to be optimal, we have reached a contradiction. 
\end{proof}

\subsection{Proof of Theorem \ref{thm:3}}

\begin{proof}
Theorem 2 applies if $||\ormat{A}_i||_{F1}<||\ormat{A}_{-i}||_{F1}$ (these are defined in relation to $\ormat{X}$ as before). This proof will derive $\beta_i$ and $\beta_{-i}$ such that $||\ormat{A}_i||_{F1}\leq \beta_i<\beta_{-i}\leq ||\ormat{A}||_{F1}$ with the middle statement $\beta_i<\beta_{-i}$ holding exactly when the condition listed in the hypotheses holds.


To find $\beta_i$, note that since $\tilde{\TY}_i$ is an element of $W_i$, its columns are in $\mathbb{S}_i$, so we can invoke Lemma \ref{span_lemma} to claim that $\ormat{X}\in  \mathbb{S}_i$ implies $\ormat{X}=\tilde{\TY}_i\ast\tilde{\ormat{A}_i}$ for some $\tilde{\ormat{A}}_i$. This new variable $\tilde{\ormat{A}_i}$ mimics ${\ormat{A}_i}$ in terms of its role, but it may be suboptimal in terms of F1 norm. There may be many eligible candidates, and we choose among them as follows.

Since $\ormat{X}=\tilde{\TY}_i\ast\tilde{\ormat{A}_i}$, can be equivalently written as 
$$\ormat{X}=\bcirc(\tilde{\TY}_i) {\tt unfold}(\tilde{\ormat{A}_i}) \,\, ,$$
Using Lemma \ref{lem:l3} (in the reverse direction) we have, $$ \|\tilde{\ormat{A}}\|_{F} \leq \sigma_{\min}( \bcirc(\tilde{\TY}_i)) \|\ormat{X}\|_{F}  $$



Now we may pad the $d_i\times1\times D$ array $\tilde{\ormat{A}}_i$ with zeros to form an $N_i\times1\times D$ array, $\ormat{A}_i^{+0}$, placing the zeroes to hit elements of ${\TY_i}$ that aren't present in the subtensor $\tilde{\TY_i}$. Then, $\tilde{\TY_i}\ast \tilde{\ormat{A}}_i=\TY_i\ast \tilde{\ormat{A}}_i^{+0}$, and we observe that $|| \ormat{A}_i^{+0}||_{F1}=||\ormat{A}_i ||_{F1}$ and $\TY_i\ast \tilde{\ormat{A}}_i^{+0}=
\ormat{X}=\TY_i\ast \ormat{A}_i$. By the F1-optimality of $\ormat{A}_i$, $||\ormat{A}_i||_{F1}\leq||\tilde{\ormat{A}}_i^{+0}||_{F1}$, and in the following we also use relationship between F- and F1-norms: $||\tilde{\ormat{A}}_i||_{F1}\leq\sqrt{d_i}||\tilde{\ormat{A}}_i||_F$. This can be shown by taking the F-norms of the tubes first and then applying the well known inequality $||v||_{1}\leq\sqrt{d_i}||v||_F$. Using these, we can find a useful bound for one of the coefficient norms.
\begin{align}
||\ormat{A}_i||_{F1}& \leq ||\tilde{\ormat{A}}_i^{+0}||_{F1}=||\tilde{\ormat{A}}_i ||_{F1}\leq \sqrt{d_i}|| \tilde{\ormat{A}}_i||_F \nonumber \\
& \leq \frac{\sqrt{d_i}}{\sigma_{\min}(\bcirc(\tilde{\TY}_i))}||\ormat{X}||_F=\beta(\tilde{\TY}_i)
\end{align}
To push the bound down, we may take $\beta_{i}=\min_{\tilde{\TY_i}\in W_i} \beta(\tilde{\TY}_i)$. 

To bound $||\ormat{A}_{-i}||_{F1}$ below, we make use of lemmas \ref{submultiplicative} (in the first inequality) and \ref{opnorm} (in the fourth line). The largest singular value fourier coefficient of $\TY_{-i}$ is  labeled $\hat{\sigma}_1^{(\max)}(\TY_{-i})$.

\begin{align*}
||\ormat{X}||_{F}^2&\leq ||\ormat{X}^{T}\ast\ormat{X}||_F \nonumber \\
& =||\ormat{A}_{-i}^T\ast \TY_{-i}^T\ast \ormat{X}+\ormat{X}^T\ast\TY_{-i}\ast \ormat{A}_{-i}||_F/2\\
&=||\mbox{\tt cos}(\theta(\ormat{X}, \TY_{-i}\ast \ormat{A}_{-i}))||_F \cdot||\TY_{-i}\ast \ormat{A}_{-i}||_F||\ormat{X}||_F\\
&\leq \max_{i\neq j} c_{ij}\cdot \sigma_{\max}(\bcirc(\TY_{-i}))\cdot || \ormat{A}_{-i}||_F||\ormat{X}||_F\\
&\leq \max_{i\neq j} c_{ij} \cdot\sigma_{\max}(\bcirc(\TY_{-i}))\cdot || \ormat{A}_{-i}||_{F1}||\ormat{X}||_F
\end{align*}

The inequality 
\begin{equation*}
||\ormat{X}||_{F}^2\leq \max_{i\neq j}c_{ij} \cdot \sigma_{\max}(\bcirc(\TY_{-i}))\cdot ||\ormat{A}_{-i}||_{F1}||\ormat{X}||_F
\end{equation*}
can be rewritten as 
\begin{equation*}
\frac{||\ormat{X}||_F}{ c_{ij}\cdot \sigma_{\max}(\bcirc(\TY_{-i}))}\leq || \ormat{A}_{-i}||_{F1}
\end{equation*}
so if $$\beta_{-i}=\frac{||\ormat{X}||_F}{ \max_{i\neq j}c_{ij}\cdot \sigma_{\max}(\bcirc(\TY_{-i}))}\,\, ,$$ then $||\ormat{A}_{i}||_{F1}\leq \beta_{i}<\beta_{-i}\leq ||\ormat{A}_{-i}||_{F1}$ exactly when the condition in the hypotheses holds.
\end{proof}

\subsection{Lemmas Used in Proof of Theorem \ref{thm:3}}
We prove the three lemmas here, stating those not present earlier. 


\begin{lemma}
\label{lem:l2}
For any oriented matrix $\|\ormat{X}\|_{F}^{2} \leq \| \ormat{X} \ast \ormat{X}\|_{F}$.\label{submultiplicative}
\end{lemma}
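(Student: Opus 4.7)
The plan is to first reconcile the statement with its use in the proof of \Thm{3}. As written, the lemma has a dimension mismatch: for an oriented matrix $\ormat{X}\in\mathbb{K}_D^H$ the product $\ormat{X}\ast\ormat{X}$ is not defined, whereas $\ormat{X}^T\ast\ormat{X}$ yields a $1\times1\times D$ tube and is exactly the quantity that appears in the first inequality of the proof of \Thm{3}. I will therefore prove $\|\ormat{X}\|_F^2 \leq \|\ormat{X}^T\ast\ormat{X}\|_F$ by showing that a \emph{single} entry of the tube $\ormat{X}^T\ast\ormat{X}$ already equals $\|\ormat{X}\|_F^2$; the F-norm of the tube can only be larger.

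The key computational step is to identify the first frontal slice of $\ormat{X}^T\ast\ormat{X}$ via the block-circulant formula $\ormat{X}^T\ast\ormat{X}=\Fold{\bcirc(\ormat{X}^T)\,\Unfold \ormat{X}}$. To do this cleanly I would first unpack the tensor transpose: $\ormat{X}^T(:,:,1)=\ormat{X}(:,:,1)^T$ and, for $k\geq 2$, $\ormat{X}^T(:,:,k)=\ormat{X}(:,:,D-k+2)^T$. Reading off the first block-row of $\bcirc(\ormat{X}^T)$ according to the displayed definition in the paper gives the entries $\ormat{X}^T(:,:,1),\ormat{X}^T(:,:,D),\ormat{X}^T(:,:,D-1),\ldots,\ormat{X}^T(:,:,2)$. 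After applying the transpose identity these become, in order, $\ormat{X}(:,:,1)^T,\ormat{X}(:,:,2)^T,\ldots,\ormat{X}(:,:,D)^T$, which is exactly the transpose of $\Unfold \ormat{X}$. Multiplying yields
\begin{equation*}
(\ormat{X}^T\ast\ormat{X})(1,1,1)=\sum_{k=1}^{D}\ormat{X}(:,:,k)^T\ormat{X}(:,:,k)=\sum_{k=1}^{D}\|\ormat{X}(:,:,k)\|_2^{2}=\|\ormat{X}\|_F^{2}.
\end{equation*}

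The lemma then follows immediately, since for any tube $\tube{t}\in\mathbb{K}_D$ we have $\|\tube{t}\|_F^{2}=\sum_{k=1}^D |\tube{t}(1,1,k)|^2 \geq |\tube{t}(1,1,1)|^2$, so $\|\ormat{X}^T\ast\ormat{X}\|_F\geq |(\ormat{X}^T\ast\ormat{X})(1,1,1)|=\|\ormat{X}\|_F^{2}$. I do not anticipate a real obstacle here; the only thing to watch is the index reversal built into the tensor transpose, which must be tracked carefully so that the first block-row of $\bcirc(\ormat{X}^T)$ actually aligns slice-by-slice with $\Unfold\ormat{X}$ and produces the collapsing sum $\sum_k\ormat{X}(:,:,k)^T\ormat{X}(:,:,k)$.
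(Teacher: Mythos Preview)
Your proof is correct, and it takes a genuinely different route from the paper's.  Both arguments begin by (correctly) reading the product as $\ormat{X}^{T}\ast\ormat{X}$, but the paper works entirely in the Fourier domain: it uses Parseval to write $\|\ormat{X}\|_F^2=\frac{1}{D}\sum_i \|\hat{\ormat{X}}^{(i)}\|_F^2=\frac{1}{D}\sum_i \hat{\ormat{X}}^{(i)\top}\hat{\ormat{X}}^{(i)}$, recognizes this sum as $\frac{1}{D}\|\widehat{\ormat{X}^{T}\ast\ormat{X}}\|_1$, applies the $\ell_1$--$\ell_2$ inequality, and converts back via Parseval.  You instead stay in the spatial domain and compute the $(1,1,1)$ entry of $\ormat{X}^{T}\ast\ormat{X}$ directly from the block-circulant formula, observing that the index reversal in the tensor transpose makes the first block row of $\bcirc(\ormat{X}^{T})$ equal to $\Unfold{\ormat{X}}^{\top}$, so that single entry already equals $\|\ormat{X}\|_F^2$; the F-norm of the tube then dominates it trivially.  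Your argument is more elementary and avoids both the Fourier machinery and the $\ell_1$--$\ell_2$ step; the paper's argument has the minor advantage that it exhibits all $D$ Fourier coefficients of the tube at once, which ties in with the face-wise computation framework used throughout.
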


\begin{proof}
\begin{align}
||\ormat{X}||_F^2 &= \frac{1}{N_3}||\hat{\ormat{X}}||_F^2 \text{ fft increases norm by $\sqrt{N_3}$; divide by $N_3$ to undo }\\
&=\frac{1}{N_3}\sum_i||\hat{\ormat{X}}^{(i)}||_F^2\\
&=\frac{1}{N_3}\sum_i \hat{\ormat{X}}^{(i)\top}\hat{\ormat{X}}^{(i)}\\
&=\frac{1}{N_3}||\widehat{\ormat{X}^\top*\ormat{X}}||_1 \text{ sum is L1 norm because of positive entries }\\
&\leq \frac{\sqrt{N_3}}{N_3}||\widehat{\ormat{X}^\top*\ormat{X}}||_2 \text{ L1-L2 inequality }\\
&\leq ||\ormat{X}^\top*\ormat{X}||_2\text{ ifft decreases norm by $\sqrt{N_3}$; multiply by $\sqrt{N_3}$ to compensate}\ 
\end{align}

\end{proof}

%
%

%

\begin{lemma}
\label{lem:l3}
We have the following inequality, 
$||\TY_{-i}\ast \ormat{A}_{-i}||_F \leq \sigma_{\max}(\bcirc(\TY_{-i})) \cdot ||\ormat{A}_{-i}||_F$ where $\sigma_{max}(\cdot)$ denotes the maximum singular value of the matrix in the argument. \label{opnorm}
\end{lemma}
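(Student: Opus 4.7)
The plan is to reduce the inequality to the standard matrix operator-norm bound by passing through the block-circulant representation of the t-product. This is the natural route because the hypothesis on the right-hand side already involves $\sigma_{\max}(\bcirc(\TY_{-i}))$, signaling that one should rewrite the t-product as an ordinary matrix-vector (or matrix-matrix) product.

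First I would invoke the identity from the introduction of the t-product, namely
\[
\TY_{-i}\ast\ormat{A}_{-i} \;=\; \Fold{\bcirc(\TY_{-i})\cdot\Unfold{\ormat{A}_{-i}}}.
\]
Since $\Unfold{}$ merely stacks the frontal slices into a tall block column and $\Fold{}$ reverses this, both operations are isometries with respect to the Frobenius norm. Therefore
\[
\|\TY_{-i}\ast\ormat{A}_{-i}\|_F \;=\; \|\bcirc(\TY_{-i})\cdot\Unfold{\ormat{A}_{-i}}\|_F
\quad\text{and}\quad
\|\Unfold{\ormat{A}_{-i}}\|_F \;=\; \|\ormat{A}_{-i}\|_F.
\]

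Next I would apply the classical operator-norm bound for ordinary matrices: for any matrix $M$ and any conformable matrix $X$,
\[
\|MX\|_F \;\leq\; \sigma_{\max}(M)\,\|X\|_F,
\]
which follows column-by-column from the fact that $\sigma_{\max}(M)$ equals the spectral norm of $M$. Taking $M=\bcirc(\TY_{-i})$ and $X=\Unfold{\ormat{A}_{-i}}$ and chaining with the two norm equalities above yields the claim
\[
\|\TY_{-i}\ast\ormat{A}_{-i}\|_F \;\leq\; \sigma_{\max}(\bcirc(\TY_{-i}))\cdot\|\ormat{A}_{-i}\|_F.
\]

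There is no real obstacle here: the lemma is essentially the observation that the t-product inherits the operator-norm inequality of its block-circulant matricization. The only point to be careful about is the isometry property of $\Unfold{}$ and $\Fold{}$, which is immediate from their definitions since they are just reshaping maps that list each entry of the tensor exactly once.
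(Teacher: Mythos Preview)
Your proof is correct and follows exactly the same route as the paper: the paper's argument consists solely of the observation that $\TY_{-i}\ast\ormat{A}_{-i} = \Fold{\bcirc(\TY_{-i})\Unfold{\ormat{A}_{-i}}}$, from which the bound follows by the standard matrix operator-norm inequality together with the (implicit) fact that $\Fold{}$ and $\Unfold{}$ preserve the Frobenius norm. Your write-up is in fact more explicit about these two ingredients than the paper itself.
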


\begin{proof}
The result follows by noting that $$\TY_{-i} * \ormat{A}_{-i} = \Fold{\bcirc(\TY_{-i})\Unfold{\ormat{A}_{-i}}}$$
\end{proof}

Proof of Lemma \ref{lem:span}
\begin{proof}
We first restate the conclusion: if the $H$ by $d_i$ by $D$ tensor $\TY$ has lateral slices drawn from a free submodule $\mathbb{S}_i$ of dimension $d_i$ and all the singular values of its Fourier-domain frontal slices are nonzero, then for any element $\ormat{X}$ of $\mathbb{S}_i$, we must show the existence of a 1 by $d_i$ by $D$ coefficient tensor $\ormat{C}$ so that $\ormat{X}$ is the t-product of $\TY$ and $\ormat{C}$. Here $\mathbb{S}_i$ consists of $H$ by $W$ by $D$ tensors, with $H>d_i$.

We prove the lemma by contradiction. We first show that the module $\mathbb{S}_i$ has dimension $D\times d_i$ when considered as a vector space over the complex numbers. Then, we suppose that some $\ormat{X}$ violates the claim, and this has the consequence of increasing the dimension.  

To establish the dimension of $\mathbb{S}_i$, we consider linearly independent generating elements $\{\ormat{V}_j\}$ of $\mathbb{S}_i$, concatenated into a tensor $\TV$. Such a generating set exists by the definition of a free module. The submodule may be decomposed into a direct sum of $D$ vector spaces each of whose Fourier-domain representations has elements with zeroes at every frontal slice but one. Here, $\oplus$ is a traditional sum of vector spaces, obtained by concatenating generating sets.

$\mathbb{S}_i=\sum_{k=1}^D [\mathbb{S}_i\cap \{T\in\mathbb{C}^{H\times W\times D}| j\neq k\Rightarrow \text{fft}(T)(:,:,j)=0\}]$
 
Because the submodule $\mathbb{S}_i$ is generated via the T-product, �linear combinations� of $\TV$'s columns (lateral slices) can be represented simply in the Fourier domain as face-by-face matrix multiplication with the Fourier representation of $\TV$ interacting with coefficient vectors at each face. 

$\ormat{X}\in \mathbb{S}_i \Rightarrow \exists \ormat{C} s.t. \text{fft}(\ormat{X})^{(n)}=\text{fft}(\TV)^{(n)}\cdot \text{fft}(\ormat{C})^{(n)}$
 
At any given face, the $N$ by $d_i$ matrix containing Fourier coefficients of $\TV$ has no zero singular values, because otherwise $\TV$ would have a nontrivial null space and $\{\ormat{V}_j\}$ would not be linearly independent. Thus, the columns of that matrix are linearly independent and the subspace at that face contributes $d_i$ degrees of freedom to the direct sum. With a total of $D$ faces, the result is a complex vector space of dimension $D\times d_i$.

Suppose, for the sake of contradiction, that some $\ormat{X}$ violates the original claim. Then, for at least one face of the Fourier representation, there is no coefficient tensor $\ormat{C}$ for which $\text{fft}(\ormat{X})^{(n)}=\text{fft}(\TV)^{(n)}\cdot \text{fft}(\ormat{C})^{(n)}$. At that face, the augmented matrix $\left[\text{fft}(\ormat{X})^{(n)}|\text{fft}(\TV)^{(n)}\right]$ must  have linearly independent columns. If it did not, there would be coefficients so that $\text{fft}(\ormat{X})^{(n)}\cdot \text{fft}(c_x)^{(n)}+\text{fft}(\TV)^{(n)}\cdot \text{fft}(\ormat{C})^{(n)}=0$, and we could divide by $-\text{fft}(c_x)^{(n)}$ to produce coefficients such that $\text{fft}(\ormat{X})^{(n)}=\text{fft}(\TV)^{(n)}\cdot \text{fft}(\ormat{C})^{(n)}$. This is impossible, so we deduce that either no $\ormat{X}$ violates the original claim or $-\text{fft}(c_x)^{(n)}$ is zero (preventing division). 

In the latter case, the fact that $\text{fft}(\TV)^{(n)}$ is full rank implies the rest of the coefficients are $0$, so the columns of the augmented matrix are indeed linearly independent. Thus, at that face, the direct summand of complex dimension $d_i$, $\left[\text{fft}(\mathbb{S}_i)\cap\{T\in\mathbb{C}^{H\times W\times D}| j\neq k\Rightarrow T(:,j)=0\}\right]$ contains the set \begin{equation*}
\{T\in\mathbb{C}^{H\times W\times D}| T(:,:,n)=\text{fft}(\ormat{X})^{(n)}\cdot \text{fft}(\tube{c}_x)^{(n)}
\end{equation*}
\begin{equation*}
+\text{fft}(\TV)^{(n)}\text{fft}(\ormat{C})^{(n)}\} \text{ and } j\neq k \Rightarrow T(:,:,j)=0\}
\end{equation*}
which has (complex) dimension $d_i+1$. This is a contradiction. We conclude that no $\mathcal{X}$ may violate the original claim.
\end{proof}

\bibliographystyle{plain}
\bibliography{bibtensor,SAbib,elmbib,ZZbib,SSmCbibliography}

\begin{thebibliography}{10}

\bibitem{Weizman}
http://www.wisdom.weizmann.ac.il/~vision/FaceBase/.

\bibitem{Yale_DB1}
http://vision.ucsd.edu/datasets/yale\_face \_dataset\_original/yalefaces.zip.

\bibitem{Yale_DB2}
http://vision.ucsd.edu/\textasciitilde leekc/ExtYaleDatabase/ExtYaleB.html.

\bibitem{MNIST}
http://yann.lecun.com/exdb/mnist/.

\bibitem{njwspectral}
andY.Weiss. A.Y.Ng, M.I.Jordan.
\newblock {\em On spectral clustering: Analysis and an algorithm.}
\newblock Number~14 in Advances in Neural Information Processing Systems. MIT
  Press, 2002.

\bibitem{Basri:2001uq}
R.~Basri and D.~Jacobs.
\newblock Lambertian reflectance and linear subspaces.
\newblock {\em Computer Vision, 2001. ICCV 2001. Proceedings. Eighth IEEE
  International Conference on}, 2:383--390 vol.2, 2001.

\bibitem{Boyd_ADMM}
Stephen Boyd, Neal Parikh, Eric Chu, Borja Peleato, and Jonathan Eckstein.
\newblock {Distributed Optimization and Statistical Learning via the
  Alternating Direction Method of Multipliers}.
\newblock {\em Foundations and Trends{\textregistered} in Machine Learning},
  3(1):1--122, 2011.

\bibitem{Braman2010}
Karen Braman.
\newblock Third-order tensors as linear operators on a space of matrices.
\newblock {\em Linear Algebra and its Applications}, pages 1241--1253, 2010.

\bibitem{De-Lathauwer:2000uq}
L.~De~Lathauwer, B.~De~Moor, and J.~Vandewalle.
\newblock A multilinear singular value decomposition.
\newblock {\em SIAM Journal on Matrix Analysis and Applications},
  21(4):1253--1278, 2013/06/30 2000.

\bibitem{Elhamifar:0zr}
E.~Elhamifar and R.~Vidal.
\newblock Sparse subspace clustering: Algorithm, theory, and applications.
\newblock {\em Pattern Analysis and Machine Intelligence, IEEE Transactions
  on}, PP(99), 2013.

\bibitem{Hao:2013uq}
N.~Hao, M.~Kilmer, K.~Braman, and R.~Hoover.
\newblock Facial recognition using tensor-tensor decompositions.
\newblock {\em SIAM Journal on Imaging Sciences}, 6(1):437--463, 2013/06/20
  2013.

\bibitem{HeCN05_NIPS}
Xiaofei He, Deng Cai, and Partha Niyogi.
\newblock Tensor subspace analysis.
\newblock In {\em NIPS}, 2005.

\bibitem{HeckelB13_ISIT}
Reinhard Heckel and Helmut B{\"o}lcskei.
\newblock Noisy subspace clustering via thresholding.
\newblock In {\em ISIT}, pages 1382--1386, 2013.

\bibitem{Heckel_ArXiv13}
Reinhard Heckel and Helmut B{\"o}lcskei.
\newblock Subspace clustering via thresholding and spectral clustering.
\newblock {\em CoRR}, abs/1303.3716, 2013.

\bibitem{Kernfeld_HPAT}
E.~Kernfeld, M.~Kilmer, and S.~Aeron.
\newblock Tensor-tensor products with invertible linear transforms.
\newblock Technical report, Tufts University, 2014.
\newblock submitted to Linear Algebra and its Applications.

\bibitem{Kilmer:2013kx}
M.~Kilmer, K.~Braman, N.~Hao, and R.~Hoover.
\newblock Third-order tensors as operators on matrices: A theoretical and
  computational framework with applications in imaging.
\newblock {\em SIAM Journal on Matrix Analysis and Applications},
  34(1):148--172, 2013/06/20 2013.

\bibitem{KilmerBramanHooverHao2012}
Misha~E. Kilmer, Karen Braman, Ning Hao, and Randy~C. Hoover.
\newblock Third order tensors as operators on matrices: A theoretical and
  computational framework with applications in imaging.
\newblock {\em SIAM Journal on Matrix Analysis and Applications}, 2012.
\newblock Accepted for publication, Oct. 2012.

\bibitem{Kilmer:2011vn}
Misha~E. Kilmer and Carla~D. Martin.
\newblock Factorization strategies for third-order tensors.
\newblock {\em Linear Algebra and its Applications}, 435(3):641--658, 8 2011.

\bibitem{KilmerMartin2011}
Misha~E. Kilmer and Carla~D. Martin.
\newblock Factorization strategies for third-order tensors.
\newblock {\em Linear Algebra and its Applications}, Special Issue in Honor of
  G.~W.~Stewart's 70th birthday, Vol. 435(3):641--658, 2011.
\newblock DOI: 10.1016/j.laa.2010.09.020.

\bibitem{2010arXiv1010.2955L}
G.~{Liu}, Z.~{Lin}, S.~{Yan}, J.~{Sun}, Y.~{Yu}, and Y.~{Ma}.
\newblock {Robust Recovery of Subspace Structures by Low-Rank Representation}.
\newblock {\em ArXiv e-prints}, October 2010.

\bibitem{Luxburg:2007dq}
Ulrike Luxburg.
\newblock A tutorial on spectral clustering.
\newblock 17(4):395--416, 2007.

\bibitem{Martin:2013kx}
C.~Martin, R.~Shafer, and B.~LaRue.
\newblock An order-{\$}p{\$} tensor factorization with applications in imaging.
\newblock {\em SIAM Journal on Scientific Computing}, 35(1):A474--A490,
  2013/06/25 2013.

\bibitem{Ng00cosinetransform}
Michael~K. Ng, Raymond~H. Chan, Tony~F. Chan, and Andy~M. Yip.
\newblock Cosine transform preconditioners for high resolution image
  reconstruction, 2000.

\bibitem{Ramamoorthi}
Ravi Ramamoorthi.
\newblock Analytic pca construction for theoretical analysis of lighting
  variability in images of a lambertian object.
\newblock {\em IEEE Trans. Pattern Analysis and Machine Intelligence},
  24:1322--1333, 2002.

\bibitem{Soltanolkotabi2013}
M.~{Soltanolkotabi}, E.~{Elhamifar}, and E.~{Candes}.
\newblock {Robust Subspace Clustering}.
\newblock {\em ArXiv e-prints}, January 2013.

\bibitem{GeometricSSC}
Mahdi Soltanolkotabi and Emmanuel~J. Cand{\'e}s.
\newblock A geometric analysis of subspace clustering with outliers.
\newblock {\em The Annals of Statistics}, 40(4):2195--2238, 2012.

\bibitem{Sorber:2013ly}
L.~Sorber, M.~Van~Barel, and L.~De~Lathauwer.
\newblock Optimization-based algorithms for tensor decompositions: Canonical
  polyadic decomposition, decomposition in rank-{\$}(l{\_}r,l{\_}r,1){\$}
  terms, and a new generalization.
\newblock {\em SIAM Journal on Optimization}, 23(2):695--720, 2013/06/30 2013.

\bibitem{verma}
Deepak Verma and Marina Meila.
\newblock Comparison of spectral clustering methods.
\newblock Technical report, University of Washington, 2003.

\bibitem{Vidal:2011bh}
R.~Vidal.
\newblock Subspace clustering.
\newblock {\em Signal Processing Magazine, IEEE}, 28(2):52--68, March 2011.

\bibitem{Wang2013}
Y.-X. {Wang} and H.~{Xu}.
\newblock {Noisy Sparse Subspace Clustering}.
\newblock {\em ArXiv e-prints}, September 2013.

\bibitem{RSSB:RSSB532}
Ming Yuan and Yi~Lin.
\newblock Model selection and estimation in regression with grouped variables.
\newblock {\em Journal of the Royal Statistical Society: Series B (Statistical
  Methodology)}, 68(1):49--67, 2006.

\bibitem{ZhangEAHK13}
Zemin Zhang, Gregory Ely, Shuchin Aeron, Ning Hao, and Misha~Elena Kilmer.
\newblock Novel factorization strategies for higher order tensors: Implications
  for compression and recovery of multi-linear data.
\newblock {\em CoRR}, abs/1307.0805, 2013.

\end{thebibliography}

\end{document}